\newcommand{\sn}[1]{}
\newcommand{\hima}[1]{}
\newcommand{\hideh}[1]{}
\newcommand{\actionable}{\mathcal{A}^p\xspace}
\newcommand{\bx}[0]{\mathbf{x}}
\newcommand{\bz}[0]{\mathbf{z}}
\newcommand{\xc}[0]{\mathbf{x}'}
\newcommand{\A}{\mathcal{A}}
\newcommand{\R}{\mathcal{R}}
\newcommand{\T}{\mathcal{T}}
\newcommand{\D}{\mathcal{D}}
\newcommand{\Own}{\mathcal{O}}
\newcommand{\ftheta}{f_{\btheta}}
\newcommand{\bdelta}[0]{\bm{\delta}}
\newcommand{\btheta}[0]{\bm{\theta}}
\newcommand{\bo}{\mathbf{o}}
\newtheorem{theorem}{Theorem}
\newtheorem{lemma}{Lemma}
\newtheorem{definition}{Definition}
\DeclareMathOperator*{\argmin}{arg\,min}
\title{On the Privacy Risks of Algorithmic Recourse}
\author{%
Martin Pawelczyk \\
University of Tübingen \\
\And
Himabindu Lakkaraju\thanks{Equal contribution} \\
Harvard University \\
\And 
Seth Neel\footnotemark[1] \\
Harvard University\\
\\
}
\begin{document}
\maketitle

\begin{abstract}
As predictive models are increasingly being employed to make consequential decisions, there is a growing emphasis on developing techniques that can provide algorithmic recourse to affected individuals.
While such recourses can be immensely beneficial to affected individuals, potential adversaries could also exploit these recourses to compromise privacy. In this work, we make the first attempt at investigating if and how an adversary can leverage recourses to infer private information about the underlying model's training data.
To this end, we propose a series of novel membership inference attacks which leverage algorithmic recourse. 
More specifically, we extend the prior literature on
membership inference attacks
to the recourse setting by leveraging the distances between data instances and their corresponding counterfactuals output by state-of-the-art recourse methods. 
Extensive experimentation with real world and synthetic datasets demonstrates significant privacy leakage through recourses.
Our work establishes unintended privacy leakage as an important risk in the widespread adoption of recourse methods. 

\end{abstract}

\section{Introduction}
Machine learning (ML) models are increasingly being deployed in domains such as finance, healthcare, and policy to make a variety of consequential decisions. As a result, there is a growing emphasis on providing \emph{recourse} to individuals who have been adversely impacted by the predictions of these models~\cite{voigt2017eu}. For example, an individual who was denied a loan by a predictive model employed by a bank should be informed about what can be done to reverse this decision. 
Several approaches in the recent literature tackled the problem of providing recourse by generating counterfactual explanations ~\cite{wachter2017counterfactual,Ustun2019ActionableRI,karimi2019model,FACE,van2019interpretable} which highlight what features need to be changed and by how much to flip a model's prediction. 
For  instance,~\citet{wachter2017counterfactual} proposed a gradient based approach to find the nearest counterfactual resulting in the desired prediction. 
More recently,~\citet{karimi2021algorithmic,karimi2020probabilistic} 
advocated for leveraging the causal structure of the underlying data when generating recourses~\citep{barocas2020,mahajan2019preserving,pawelczyk2019}. 

As algorithmic recourses seep into real-world applications, adversaries could potentially exploit these recourses to extract information about the underlying models and their training data, thereby leaking sensitive information (e.g., a bank's customer data) and enabling fraudulent activities. 
Therefore, there is a clear and urgent need to investigate the privacy risks associated with algorithmic recourse. 
While there is extensive literature on privacy attacks and defenses for machine learning models~\cite{rigaki2020survey,abadi2016deep,papernot2018scalable,shokri2017membership}, there is very little research~\cite{shorki2021explanation,aivodji2020model} that focuses on the privacy risks that arise when adversaries have access to \emph{explanations} which highlight the rationale behind one or more model predictions. 
Recently, \citet{shorki2021explanation} studied if and how feature attribution based explanations (which capture the feature importances associated with individual model predictions) leak sensitive information, and ~\citet{aivodji2020model} developed model extraction (i.e., reconstructing the underlying model) attacks against counterfactual explanations. However, neither of these works explore if and how adversaries may leverage recourses to infer sensitive information about the underlying model's training data.
\hideh{
Recently,~\citet{shorki2021explanation} explored how feature attribution based explanations, which capture the feature importances associated with individual model predictions, can leak sensitive information. However, they do not consider the privacy risks associated with counterfactual explanations or algorithmic recourse. 
~\citet{aivodji2020model}, on the other hand, developed model extraction (i.e., reconstructing the underlying model) attacks against counterfactual explanations. 
However, 
this work does not explore if and how adversaries may leverage recourses to infer sensitive information about the underlying model's training data. 
}

\hima{Martin TODO: summarise experimental takeaways including the study of number of features on the linearity, and then studying the role of capacity vs. number of features. Add more detail in experiments section and discuss in the conclusion.}
In this work, 
we address the aforementioned gaps by initiating a study of \emph{if and how an adversary can leverage algorithmic recourses to leak sensitive information about the training data of the underlying model}. 
To this end, we introduce a general class of membership inference attacks called \emph{counterfactual distance-based attacks} which leverage algorithmic recourse to determine if an instance belongs to the training data of the underlying model or not. In formulating this new class of attacks, we exploit the intuition that the distance between an instance and its corresponding recourse may capture information about whether that instance was used to train the model. We instantiate the aforementioned class of attacks to propose two novel membership inference attacks. Our first attack infers membership by thresholding on the distance between a given instance and its corresponding algorithmic recourse. Our second attack draws inspiration from state-of-the-art loss-based membership inference attacks~\cite{carlini2021, enhanced_mi} and proposes a likelihood ratio test (LRT) that accounts for algorithmic recourse. 
Our attacks operate under the assumption that the adversary can only query the recourse algorithm once. This assumption is a lot more practical than those considered in related works~\cite{aivodji2020model}, and is inline with real-world settings where an end user would typically be provided with a single recourse and will not be able to query the underlying model or recourse algorithm multiple times~\cite{Ustun2019ActionableRI,wachter2017counterfactual}. \emph{To the best of our knowledge, our work is the first to introduce membership inference attacks which leverage algorithmic recourse}. 
\hideh{
Note that this setting is significantly more restrictive than allowing the adversary access to multiple queries~\cite{aivodji2020model}. 
By demonstrating successful attacks in the significantly more restrictive and practical \emph{one-shot} setting, it further highlights the privacy risks inherent in recourse. }
\hideh{
To this end, we propose a series of novel membership inference (determining if an instance belongs to the training data of the underlying model or not) attacks which leverage algorithmic recourse. 
Our main contribution is to introduce a general class of membership inference attacks called \emph{counterfactual distance-based attacks}, and describe how algorithmic recourse can be used to orchestrate such attacks. 
In formulating this new class of attacks, we leverage the intuition that the distance between a model's decision boundary and an instance may capture information about whether that instance was used to train the model. We propose two variants of our attack.
}

We experiment with multiple real world datasets spanning diverse domains such as lending, healthcare, and law to evaluate the effectiveness of the proposed attacks. 
Our experimental results clearly demonstrate the efficacy of the proposed attacks, and highlight significant privacy leakage through recourses generated by a wide range of recourse algorithms. In addition, the proposed attacks also outperform the state-of-the-art loss-based membership inference attacks (which do not leverage recourses) on data with sufficiently high dimensionality, thus highlighting the promise of our recourse-based attacks as generic membership inference attacks. We also empirically analyze the factors contributing to the success of our attacks, and find that our attacks are highly successful when the underlying model overfits to the training data~\cite{carlini2021} and the dimensionality of the data is high. Overall, our results establish unintended privacy leakage as an important risk in the widespread adoption of recourse algorithms. 
\hideh{
We also find that the simple counterfactual distance-based attack works well for linear models with high dimensions, but stops working for non-linear models in which case our LRT based variant substantially outperforms random guessing. 
Our most surprising finding is that while our attack is successful against recourse algorithms based on generative models like \texttt{CCHVAE} \cite{pawelczyk2019}, the intuition that training points should have larger counterfactual distances is exactly reversed. 
While this stands in contrast to our other results, it can be partially explained by existing work on attacks against generative models, which we discuss in Section~\ref{sec:experiments}. \sn{rewrote this section}}


\section{Related Work} \label{section:related_work}
\textbf{Algorithmic Recourse.}
Several approaches have been proposed in literature to provide recourse to individuals who have been negatively impacted by model predictions \citep{tolomei2017interpretable,laugel2017inverse,Dhurandhar2018,wachter2017counterfactual,Ustun2019ActionableRI,joshi2019towards,van2019interpretable,pawelczyk2019,mahajan2019preserving,mothilal2020fat,karimi2019model,rawal2020interpretable,karimi2020probabilistic,dandl2020multi,antoran2020getting,spooner2021counterfactual}. 
These approaches can be broadly categorized based on \citep{verma2020counterfactual}: 
\emph{type of the underlying predictive model} (e.g., tree vs.\ differentiable classifier), \emph{type of access} they require to the underlying predictive model (e.g., black box vs.\
gradient access), whether they encourage \emph{sparsity} in counterfactuals (i.e., only a small number of features should be changed), whether counterfactuals should lie on the \emph{data manifold}, whether the underlying \emph{causal relationships} should be accounted for when generating counterfactuals, and whether the output produced by the method should be \emph{multiple diverse counterfactuals} or a single counterfactual. 
Recent research also highlighted and addressed various challenges pertaining to the robustness~\cite{dominguezolmedo2021adversarial,slack2021counterfactual,pawelczyk2022algorithmic,rawal2021modelshifts} 
and fairness~\cite{gupta2019equalizing,von2020fairness} 
of algorithmic recourse. 
However, none of the aforementioned works explore the privacy risks associated with algorithmic recourse which is the focus of our work. 

\textbf{Privacy Attacks for ML Models.}
There is a long line of prior work developing privacy attacks on machine learning models~\cite{rigaki2020survey,shokri2017membership,sablayrolles2019whitebox, carlini2021, enhanced_mi}. 
One class of attacks called \emph{membership inference attacks} focus on determining if a given instance is present in the training data of a particular model~\cite{shokri2017membership,sablayrolles2019whitebox, carlini2021, enhanced_mi}.
These attacks typically exploit the differences in the distribution of model confidence on the true label (or the loss) between the instances that are in the training set and those that are not. For example,~\citet{shokri2017membership} proposed a loss-based membership inference attack which determines if an instance is in the training set by testing if the loss of the model for that instance is less than a specific threshold. Other membership inference attacks are also predominantly loss-based attacks where the calibration of the threshold varies from one proposed attack to the other~\cite{sablayrolles2019whitebox, carlini2021, enhanced_mi}. 

\hima{Martin TO DO}
\hima{cut this down; just cite 6, say there is work that leverage additional info beyond loss functions to do membership inference attacks. for instance, 6 leverages adversarial examples to orchestrate these attacks. However, these pieces of information are less likely to be available/accessible to general public unlike recourse where the main goal is to hand them over to various kinds of customers/consumers but none of these works including 6 focus on recourse. don't say anything about technical similarities and differences here.}

Some works leverage additional information beyond loss functions to do membership inference attacks. For instance, \cite{label-only} leverages adversarial examples to orchestrate membership inference attacks.
While there exist similarities between adversarial examples and recourses output by \texttt{SCFE} \cite{wachter2017counterfactual}, the recourses output by other SOTA methods such as \texttt{CCHVAE} \cite{pawelczyk2019} and \texttt{GS} \cite{laugel2017inverse} are quite different from adversarial examples and their framework does not apply to algorithmic recourse broadly.
\hideh{
Reconstruction attacks attempt to reconstruct the underlying private information (e.g., whole or part of the training set) that was used to produce a published statistical analysis~\cite{rec_attack_surv, rec_attack_ub, rec_attack_understand}. 
 In a model extraction attack, the adversary attempts to fully reconstruct the trained model via queries to the model~\cite{extraction, shokri2017membership, def_reg_1}. 
 None of the aforementioned works, however, consider the privacy risks associated with exposing model explanations. 
}

\textbf{Intersections between Privacy and Explainability.} The intersection between privacy and explainability, which is the focus of our work, is relatively under explored. Recently,~\citet{shorki2021explanation} 
developed a membership inference attack against a variety of feature attribution based explanation methods (e.g., LIME~\cite{ribeiro2016should}, SHAP~\cite{lundberg2017unified}, Gradient-based methods~\cite{sundararajan2017axiomatic,smilkov2017smoothgrad}). 
Their attack exploits the intuition that the higher the variance of the feature attribution corresponding to an instance, the more uncertain the corresponding prediction (i.e., higher model loss), and therefore the less likely it is that the instance belongs to the training set. Furthermore,~\citet{aivodji2020model} developed a model extraction attack against counterfactual explanation methods. Their attack leverages model predictions and counterfactual explanations (output in case of unfavorable predictions) corresponding to a set of instances, and learns a proxy model that mimics the behavior of the model under attack as closely as possible. None of these works develop membership inference attacks against counterfactual explanation (algorithmic recourse) methods which is the main theme of our work. 

\section{Preliminaries} \label{section:preliminaries}
Let us consider a predictive model $\ftheta:\mathcal{X}\to\mathcal{Y}$ where $\mathcal{X}\subseteq\mathbb{R}^d$ is the feature space, $\mathcal{Y}$ is the space of outcomes, and $\theta \in \Theta$ denotes the parameters of the model $\ftheta$. 
Let $\mathcal{Y} = \{0,1\}$ where $0$ and $1$ denote an unfavorable outcome (e.g., loan denied) and a favorable outcome (e.g., loan approved) respectively. In practice $\ftheta(x)$ will output a probability in $[0,1]$ of a positive classification, which is then thresholded to obtain a binary classification, and we will denote by $(\ftheta(x))_{y}$ the probability $\ftheta$ assigns to a binary label $y$. Let us assume that the model $\ftheta$ was trained using some data set $D_{t} = (X_{t},Y_{t})$ where each $(x,y) \in D_t$ is sampled from an underlying data distribution $\mathcal{D}$. A training algorithm $\T: (\mathcal{X} \times \mathcal{Y})^{n} \to \Theta$ is a potentially randomized algorithm that takes in a dataset $D_t$ and outputs a model $\ftheta$. With this notation in place, we provide an overview of the standard formulations for algorithmic recourse as well as membership inference attacks.

\paragraph{Algorithmic Recourse.}
Let $x \in \mathcal{X}$ be an instance which received a negative outcome i.e., $\ftheta(x) = 0$. The goal here is to find a recourse for this instance $x$ i.e., to determine a set of changes $\delta$ that can be made to $x$ in order to reverse the negative outcome. The problem of finding a recourse for $x$ involves finding a counterfactual $x' = x + \delta$ for which the predictive model outputs a positive outcome i.e., $\ftheta(x') = \ftheta(x+\delta) = 1 $.
Note that it is desirable to minimize the cost $c(x, x')$ required to change $x$ to $x'$ so that the recourse is easily implementable. 
In practice, $\ell_1$ or $\ell_2$ distance are commonly used as cost functions \cite{wachter2017counterfactual}. 
Furthermore, since recommendations to change features such as gender or race would be unactionable, it is important to restrict the search for counterfactuals so that only actionable changes are allowed. Let $\actionable$ denote the set of plausible or actionable counterfactuals. 

Putting it all together, the problem of finding a recourse for instance $x$ for which $\ftheta(x) = 0$ can be formalized as~\cite{wachter2017counterfactual}:
\begin{align}
x' &= \argmin_{\substack{x' \in \actionable}} \ell(\ftheta(x'),1) + \lambda \cdot c(x, x'),
\label{eqn:generalrecourse_un}
\end{align}
where $\ell:\mathcal{X} \times \mathcal{Y} \to \mathbb{R}_{+}$ denotes a differentiable loss function (e.g., binary cross entropy loss) which ensures that gap between $\ftheta(x')$ and the favorable outcome $1$ is minimized, and $\lambda>0$ is a trade-off parameter. 
Eqn.\ \eqref{eqn:generalrecourse_un} captures the generic formulation leveraged by several of the state-of-the-art recourse finding algorithms~\cite{wachter2017counterfactual}. In general, we denote (potentially randomized) recourse algorithms as $\R: (\Theta, \mathcal{X}) \to \mathcal{S}$.
In the standard recourse setting $\R$ returns a recourse $x'$ and so $\mathcal{S} = \mathcal{X}$.

\hideh{
We follow an established definition of counterfactual explanations originally proposed by \cite{wachter2017counterfactual}.
For a given model $f_{\ftheta}: \mathbb{R}^d \xrightarrow{} \mathbb{R}$ parameterized by $\btheta$ and a distance function $d(\cdot, \cdot): \mathcal{X} \times \mathcal{X} \to \mathbb{R}_{+}$, the problem of finding a counterfactual explanation $\xc = \bx + \bdelta$ for a factual instance $\bx$ can be expressed as follows:
\begin{equation}
\label{eq:wachter}
\bdelta = \argmin_{\bdelta' \in \mathcal{A}_d} \, (f_{\ftheta}(\bx + \bdelta')-s)^{2}+\lambda \cdot \, d(\bx,\bx + \bdelta'), 
\end{equation}
where $\lambda \geq 0$ is a scalar weight and $s$ denotes the target score. 
The first term in the objective on the right-hand-side of \eqref{eq:wachter} encourages the outcome $f_{\ftheta}(\xc)$ to become close to the user-defined target score $s$, while the second term ensures that the distance between the factual instance $\bx$ and the counterfactual instance $\xc$ is low.
Here $\mathcal{A}_d$ represents a set of constraints ensuring that only admissible changes are made to the factual input $\bx$.
For example, $\mathcal{A}_d$ could specify that no changes to protected attributes such as `gender' or `race' can be made.
}

\begin{table}[tb]
\centering
\begin{tabular}{ccccc}
\toprule
 Info & \texttt{Loss} & \texttt{CFD} & \texttt{Loss LRT} & \texttt{CFD LRT}  \\
 \cmidrule(lr){1-1}  \cmidrule(lr){2-5}
Query access to $f_{\btheta}$ & $\checkmark$ & $\times$ & $\checkmark$ & $\times$  \\
Query access to $\R$ & $\times$ & $\checkmark$ & $\times$ & $\checkmark$ \\
Known loss function & $\checkmark$ & $\times$ & $\checkmark$ & $\times$ \\
Access to $\D^N$ & $\times$ & $\times$ & $\checkmark$ & $\checkmark$  \\
Access to true labels & $\checkmark$ & $\times$ & $\checkmark$ & $\times$ \\
\bottomrule
\end{tabular}
\caption{Summarizing the assumptions underlying the different MI attacks. The recourse based attacks do not require access to the true labels nor do they need to know the correct loss functions.}
\vspace{-0.2in}
\label{tab:summary}
\end{table}
\paragraph{Membership Inference Attacks for ML Models}\label{subsec:mia}
The goal of a membership attack is to create a function that accurately determines if an instance $z = (x,y)$ belongs to the training set of the model $\ftheta$. Several membership inference attacks proposed in literature exploit the intuition that models have lower loss on instances that were observed during their training. Such approaches are commonly referred to as \emph{loss-based attacks}. Below, we discuss two of the most popular loss-based attacks developed by~\citet{yeom2017privacy} and~\citet{carlini2021}. 

\emph{Thresholding on Model Loss~\cite{yeom2017privacy}.} This attack takes an instance $x$ and determines whether it is a member of the training set (\texttt{MEMBER}) by checking if the \texttt{Loss} of the model $\ftheta$ 
is lower than or equal to a threshold $\tau_L$:
\begin{align}
M_{\text{Loss}}(x) = 
\begin{cases}
\emph{\texttt{MEMBER}} & \text{ if } \ell(\btheta,z) \leq \tau_L \\
\emph{\texttt{NON-MEMBER}} & \text{ if } \ell(\btheta,z) > \tau_L.
\end{cases}\label{eqn:threshold-loss}
\end{align}
\citet{sablayrolles2019whitebox} demonstrated that this attack is nearly optimal in the sense that it is approximately equivalent to the likelihood ratio test under certain conditions, which is the uniformly most powerful test for a given significance level (by the Neyman-Pearson Lemma). Note that this attack is only feasible when the adversary has access to the true label $y$ of $x$, and the model's loss function $\ell$ and its parameters $\theta$.

\emph{Likelihood Ratio Attack~\cite{carlini2021}.} Recent work has attempted to further approximate a test based on the full likelihood ratio of the model $\theta$ by computing the likelihood ratio of the model loss, or equivalently confidence, $\text{conf}(f_{\theta}, z)$. Given sample access to $\mathcal{D}$ the adversary trains shadow models, and computes $\text{conf}(f_{\theta}, z)$ in the case when $z$ is included in the training set and when it is a test point. Under the assumption that the logit scaled confidence is normally distributed, normal distributions are fit to the ``in'' and ``out'' scaled confidences, and an approximate likelihood ratio $\Lambda =  \frac{ \Pr[\text{conf}(\ftheta, z) \text{ } | \text{ }  \mathcal{N}(\mu_{in}, \sigma^2_{in})] } { \Pr[\text{conf}(\ftheta, z) \text{ } | \text{ } \mathcal{N}(\mu_{out}, \sigma^2_{out})] }$ is computed. Finally, the adversary predicts \texttt{MEMBER} when $\Lambda > \tau$.
We call this attack \texttt{Loss LRT}.
In Subsection~\ref{subsec:attack_thresh} we develop an LRT attack based not on $\text{conf}(f_{\theta}, z)$, but on the \emph{counterfactual distance}, which does not require direct access to $f_\theta$ or $y$ and can be implemented with algorithmic recourses.
\hideh{

This attack formulates the membership inference task as a hypothesis test to guess whether $\ftheta$ was trained using the target instance $z = (x,y)$ or not. 
Since directly computing the likelihood ratio is generally intractable, the adversary trains $N$ shadow models on random samples from the underlying data distribution $\mathcal{D}$, so that half of these models are trained using the target instance $z$ (IN models), and the other half are not (OUT models). The adversary then 
fits two Gaussian distributions, one each to the logit of the confidences (probabilities assigned to the correct label) of all the IN and OUT models respectively on $z$. 
Lastly, the adversary queries the confidence of the target model $\ftheta$ on the instance $z$, and then outputs the following parametric Likelihood-ratio test:  
\begin{align}\label{eq:lrt}
\Lambda = \frac{ p(\text{conf}(\ftheta, z) \text{ } | \text{ }  \mathcal{N}(\mu_{in}, \sigma^2_{in})) } { p(\text{conf}(\ftheta, z) \text{ } | \text{ } \mathcal{N}(\mu_{out}, \sigma^2_{out})) },
\end{align}
where conf$(\ftheta,z) = \text{logit}(\ftheta(x)_{y})$ denotes the logit of the probability the model $\ftheta$ assigns to the true label $y$, and $\mu_{in}, \sigma^2_{in}$ and $\mu_{out}, \sigma^2_{out}$ denote the means and variances of the Gaussian distributions fit to the logit of the confidences output by the IN and OUT models respectively. If $\Lambda$ is less than some threshold $\tau$, then the adversary rejects the hypothesis that $z$ belongs to the train set.  Since the aforementioned test involves training models with and without the target instance $z$, it is computationally expensive. To address this shortcoming, prior work has proposed a more efficient \emph{offline} likelihood ratio attack where the adversary only trains the OUT models (ahead of time) and then performs a one-sided hypothesis test which measures the probability of observing a confidence as high as the target model's under the null hypothesis that the instance $z$ is not in the training set: $\Lambda = 1 - p[Z > \text{conf}(\ftheta,z)]$ where $Z \sim \mathcal{N}(\mu_{out}, \sigma^2_{out}).$

1. membership inference attacks are popular. what exactly is the attack in words and notation.. what information does the adversary have access to\\
2. Majority of these attacks rely on loss function and exploit that instances that are in the training set have low loss compared to those outside. We describe two popular membership inference attacks which we will later leverage in our work.
3. \emph{Thresholding on the Loss} describe the attack, notation, and how to choose threshold.. 
4. \emph{Likelihood Ratio Attack} describe the attack, notation

\textbf{TODOS}
\begin{itemize}
    \item Tidy up
    \item Drop Def 1
    \item Make hypothesis formulation consistent with remaining literature
\end{itemize}

Structure of this section
\begin{itemize}
\item Describe information sets of adversary and model owner
\item Previous loss-based attacks:
\begin{itemize}
    \item[1] Thresholding on the loss (+shadow models)
    \item[2] Per-sample harndess loss-based thresholds
    \item[3] Attacks that approximate a likelihood ratio test
\end{itemize}
\end{itemize}

Membership inference attacks (MIA) are arguably the most often used auditing technique to evaluate training data privacy leakages. 
The goal of MIAs is to determine whether a specific sample $\bz_1' = (\bx_1', y_1')$ was used for training the predictive model $f_{\ftheta}$ or not.
We first describe the MIA through the lens of the standard membership inference security game \citep{yeom2017privacy} before we describe the idea underlying commonly used attack algorithms.
Below we describe the MI game.
\begin{definition}[Membership Inference Game \citep{yeom2017privacy}]
The game has two players: a model owner (O) and an adversary (A). The players take the following actions:
\begin{itemize}
\item[(1)] (i) O samples a training data set of size $N$ from the population i.e., $\mathcal{D}_{train} \sim \mathcal{D}_{pop}$, and (ii) trains a model $f_{\ftheta}$ using the training data. (iii) O flips a coin: if ``heads'', O samples $\bz_1' \sim \mathcal{D}_{pop}$, else $\bz_1' \sim \mathcal{D}_{train}$. Finally, the sample $\bz_1'$ is sent to A.
\item[(2)] A obtains query access to $\mathcal{D}_{pop}$ and to the model $f_{\ftheta}$ and produces a binary guess $G$ indicating whether $\bz_1'$ was used during training of $f_{\ftheta}$ or not.
\end{itemize}
\label{defintion:mi_game}
\end{definition}
Most MI works use the setup described in Definition \ref{defintion:mi_game} or a slight variation thereof to devise MI attacks (e.g.,  \citep{yeom2017privacy,shokri2017membership,sablayrolles2019whitebox,shorki2021explanation}).
These works rely on the intuition that predictive models tend to have lower loss on members of the training data set.
Thus, the objective becomes to determine a threshold $\tau_L$ such that inputs $\bz_1'$ with a loss $\ell(\btheta, \bz_1')$ lower than $\tau_L$ are considered training set members, while inputs with a loss higher than $\tau_L$ are considered non-members:
\begin{align}
G_{\text{Loss}}(\theta, \bz_1') = 
\begin{cases}
\text{train} & \text{ if } \ell(\btheta, \bz_1') \leq \tau_L \\
\text{test} & \text{ if } \ell(\btheta, \bz_1') > \tau_L.
\end{cases}
\end{align}

For the LRT, we will be interested in the following hypotheses:
\begin{align}
& \text{H}_0: \bz_1' = \bz_1 \in \mathcal{D}_{train}& \text{vs}. & & \text{H}_1: \bz_1' \not = \bz_1 \in \mathcal{D}_{train}. 
\end{align}
The goal is to reject the null hypothesis to conclude that $\bz = (\bx_1,y_1)$ was not in the train set.

In the loss-based setup, the loss is directly used as the test statistic.
To make inference on this question, we will compute the following likelihood ratio:
\begin{align}
\Lambda_{\text{Loss}} = \frac{p(\ell(\btheta, \bz_1')|\bz_1'= \bz_1 \in \mathcal{D}_{train})}{p(\ell(\btheta, \bz_1')|\bz_1' \not= \bz_1 \in \mathcal{D}_{train})}.
\end{align}
}



\section{Our Framework}
\label{section:theory}

In this section, we introduce a general class of novel membership inference attacks that leverage algorithmic recourse.
First, we introduce the recourse-based membership inference game which generalizes the previously proposed membership inference 
attacks~\cite{yeom2017privacy} on ML models to account for information captured in algorithmic recourse (see Table \ref{tab:summary} for an overview).
Then we introduce an attack that uses the distance between the recourse and the input (Subsection~\ref{subsec:distance}), and show how the attack can be improved with a likelihood-ratio test (LRT) based approach in the style of \cite{carlini2021} which we present in Subsection~\ref{subsec:attack_thresh}. 

\begin{definition}[Recourse-based MI Game]
The game has two players: a model owner ($\Own$) and an adversary ($\A$). The players take the following actions:
\vspace{-0.1in}
\begin{itemize}
\item[(1)] $\Own$ draws a training set from the population $D_t \sim \D^{N}$, and using training algorithm $\T$, equipped with loss function $\ell$ trains a model $f_{\btheta} \sim \T(D_t)$. $\Own$ then labels every point $z \in D_t$ with a binary label $f_{\btheta}(z)$. Let $D_t^0$ denote the subset of the training data with $f_{\btheta}(x) = 0$, and let $\D^{\theta, 0}$ denote the conditional distribution $p(z) \sim D | f_{\btheta}(z) = 0$.  $\Own$ flips a coin: if ``heads'', $\Own$ samples $x \sim \D^{0, \theta}$, else $x \sim D_t^{0}$. Using recourse generation algorithm $\R$, $\Own$ generates a recourse $x' \sim \R(f_{\btheta}, x, D_t)$ for $x$. Then $\Own$ sends $s = (x', x)$ to $\A$.
\item[(2)] In addition to $s$, $\A$ obtains query access to $\D$. We assume that $\A$ has full knowledge of all the implementation details of $\Own$, including the specifics of $\T$ and $\R$. Finally $\A$ produces a binary guess $G$ indicating whether $x \in D_t$ (\texttt{MEMBER}) or $x \not \in D_t$ (\texttt{NON-MEMBER}). 
\end{itemize}
\label{defintion:mi_game_recourse}
\end{definition}
\sn{I put in a few sentences about evaluation next; } \hima{TODO Hima to take a on this paragraph as well.}
\vspace{-0.1in}
 We now present two attacks based on a statistic we call the \emph{counterfactual distance}.

\subsection{Thresholding on Counterfactual Distance}\label{subsec:distance} \hima{TODO: Hima to take a pass on 4.1 to clean up.} In the recourse-based MI Game, all the adversary has access to is the original instance $x$ and its counterfactual $x'$ generated by recourse algorithm. Loss-based attacks perform well at determining whether a point is a \texttt{MEMBER} of the training set or not, because the model typically over-fits to the training points, leading to lower losses on these points than on the test set. One explanation for this loss-disparity that has been given in prior work \cite{shorki2021explanation}, is that during the training process, the decision boundary is forced away from training points. This suggests that points in the training set should be further from the boundary than points in the test set, motivating a distance-based attack that predicts a point is a \texttt{MEMBER} of the training set if its loss is below some threshold $\tau$. The distance of a point $x$ to the boundary can be computed as $c(x, x')$ where $x'$ is the solution to Equation~\ref{eqn:generalrecourse_un} with $\mathcal{A}^{p} = \mathbb{R}^d$.
So if we exactly optimize the objective function in Equation~\ref{eqn:generalrecourse_un} that underpins our recourse algorithms (with $\actionable = \mathbb{R}^d$), the counterfactual distance $c(x, x')$ is exactly the distance to the model boundary. While algorithms that focus on generating \emph{realistic} recourses~\cite{pawelczyk2019,karimi2021algorithmic} do not exactly optimize this objective in general, we can still view the distance to the recourse as a proxy for the distance of $x$ to the model boundary. Based on this intuition we have the following counterfactual distance (\texttt{CFD}) based attack:
\begin{align}
M_{\text{Distance}}(x) = 
\begin{cases}
\emph{\texttt{MEMBER}} & \text{ if } c(x,x') \geq \tau_D \\
\emph{\texttt{NON-MEMBER}} & \text{ if } c(x,x') < \tau_D.
\end{cases}\label{eqn:threshold-dist}
\end{align}
Following \cite{carlini2021}, we assume for the first two attacks below that $\A$ knows apriori an optimal threshold $\tau_{\alpha}$ that maximizes a given TPR subject to a fixed FPR $\alpha$, as the purpose of these simple attacks is to illustrate the potential privacy leakage through the recourse output. In practice, we will be plotting the TPR vs. FPR curves over all values of the threshold $\tau_{\alpha}$ and so we will not need to pick a specific one.

\subsection{Likelihood Ratio Test using Counterfactual Distance}\label{subsec:attack_thresh}
\begin{algorithm}[!ht]
\begin{algorithmic}[1]
\caption{One-sided Distance-based Likelihood Ratio Test (\texttt{CFD LRT})}
\label{alg:thresh_lrt}
\State \textbf{Inputs:} point $(x, y)$, recourse output $s = \texttt{GetRecourse}(x, f_{\btheta}), \D$; FP-Rate: $\alpha$, \# Shadow Models: $N$, $\T = \texttt{TrainClassifier}(\cdot)$
\State $\text{teststats} = []$
\State \textbf{Compute}: $t_0 = T(s) = c(x,x')$ 
\For{$i=1:N$} 
\State Sample $\mathcal{D}_t^{(i)} \sim \D$
\State $f_{\btheta^{(i)}} = \texttt{TrainClassifier}(\mathcal{D}^{(i)})$
\State $s^{(i)} = \texttt{GetRecourse}(x, f_{\btheta^{(i)}})$
\State teststats $\xleftarrow{} T(s^{i}) = c(x, x'^{(i)})$
\EndFor
\State $\hat{\mu}_{\text{MLE}} = \frac{1}{N} \sum_{i=1}^N \big(\log c(x, \xc^{(i)}) \big)  \big)$
\State $\hat{\sigma}^2_{\text{MLE}}  = \frac{1}{N} \sum_{i=1}^N \big(\hat{\mu}_{\text{MLE}} - \log\big(c(x, \xc^{(i)})\big) \big)^2$
\If{$t_0$ > $z_{1-\alpha}$} \Comment{ $z_{1-\alpha}$ is the $1{-}\alpha$-quantile of $Z \sim \mathcal{LN}(\hat{\mu}_{\text{MLE}}, \hat{\sigma}^2_{\text{MLE}}$})
    \State \textbf{Output:} $G$ = \texttt{\texttt{NON-MEMBER}} 
\Else
    \State \textbf{Output:} $G$ = \texttt{MEMBER} 
\EndIf
\end{algorithmic}
\end{algorithm}
\citet{carlini2021} showed that \texttt{Loss LRT} attacks perform better than simple \texttt{Loss} thresholding. In Algorithm~\ref{alg:thresh_lrt} we present an LRT version of our counterfactual distance attack (\texttt{CFD LRT}).
As in prior work \cite{carlini2021, sablayrolles2019whitebox} since computing the LRT (Equation~\ref{eq:cfd_lrt}) exactly is intractable, we  make several modifications to the attack that allow us to compute it efficiently.
The full likelihood ratio given $c(x,x')$ is:
 \begin{equation}
 \label{eq:cfd_lrt}
     \Lambda =  \frac{ \Pr[c(x, x')|x \in D_t]} {\Pr[c(x, x')|x \not \in D_t]}.
 \end{equation}
 We model the distributions of our statistic $c(x, x')$ as log-Normal, and so in order to compute eqn.\ \eqref{eq:cfd_lrt}, we need to estimate $(\mu_{in}, \sigma_{in}), (\mu_{out}, \sigma_{out})$ where we assume that if $D_t\backslash \{x\} \sim \mathcal{D}, \theta \sim \mathcal{T}(\{x\} \cup D_t), x' \sim \mathcal{R}(x, \theta, D_t)$, then $\log c(x,x') \sim \mathcal{N}(\mu_{in}, \sigma_{in})$, and similarly when $x' \not \in D_t$, $\log c(x,x') \sim \mathcal{N}(\mu_{out}, \sigma_{out})$. Given access to $\mathcal{D}$ we can estimate the parameters $\mu, \sigma$ by drawing fresh datasets, training models $\theta$ -- with or without a given point $x$ -- and then computing the resulting counterfactual distances (Lines $5-8$). However, as in \cite{carlini2021} we note that to approximate the numerator, we have to perform this sampling and model training separately for each $x$ that we perform the attack on, which is computationally infeasible. Hence in Algorithm~\ref{alg:thresh_lrt} we present a one-sided version of the LRT, where in Lines $10-11$ we estimate $\mu_{out}, \sigma_{out}$, and our attack predicts $\texttt{MEMBER}$ if $c(x,x')$ has a sufficiently low likelihood under these parameters. Note that since $\mu_{out}, \sigma_{out}$ do not depend on $x$, we only need to perform the process of training Shadow models once, even if we are evaluating our attack on many different $x$'s.

\subsection{Is privacy leakage through recourses inevitable?}\label{subsec:priv}
The attacks developed above and empirical results in Section~\ref{sec:experiments} suggest that recourses can be exploited to infer private information about the underlying training set. This raises a natural question: Is privacy leakage through recourses inevitable? 

Over the last decade, differential privacy \cite{privacybook} has emerged as the canonical approach to provably preventing membership inference for a wide array of statistical tasks. Applying this to the recourse setting, results which have been folklore in the privacy community imply that if the recourse generation algorithm is DP in the training data, we can provably bound the success of \emph{any} adversary $\mathcal{A}$ in the Recourse-based MI Game. In Theorem~\ref{priv_thm} (proof deferred to Supplement) we state a variant of the folklore result tailored to our setting, showing that not only can we bound the excess accuracy of the adversary over random guessing,  we can also bound the balanced accuracy (BA). Since $\text{BA} = \frac{\text{TPR} + \text{TNR}}{2}$, this implies that for a small FPR $\alpha$, the TPR of $\mathcal{A}$ is also close to $\alpha$. Recent work advocates for evaluating the success of MI attacks at low FPR instead of just looking at the overall accuracy \cite{enhanced_mi, carlini2021}. 

\begin{theorem}\label{priv_thm}
Let $\T: (\mathcal{X} \times \mathcal{Y})^{n} \to \Theta$ denote the training algorithm, draw $D_t \sim \D^n$ and  and $\A$ be an arbitrary adversary that receives $z = (x, y), s \sim \R(f_{\theta},x, D_t)$  from the recourse inference game, and produces a guess $G \in \{\text{MEMBER}, \text{NON-MEMBER}\}.$ Then, if $\mathcal{R}$ is $(\epsilon,0)$-differentially private, we have for all $\A$:
\vspace{-0.1in}
$$\text{BA}_{\mathcal{A}} \leq \frac{1}{2} + \frac{1-e^{-\epsilon}}{2}.$$
\end{theorem}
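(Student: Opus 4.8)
This is the standard ``differential privacy precludes membership inference'' argument, adapted to balanced accuracy and to the Recourse-based MI Game. The plan is to first reduce the statement to a single inequality, then obtain that inequality from an indistinguishability claim plus differential privacy's post-processing property. Write $b=1$ for the ``tails'' branch (so $x\in D_t$, i.e.\ \texttt{MEMBER}) and $b=0$ for the ``heads'' branch (\texttt{NON-MEMBER}), each occurring with probability $\tfrac12$, and let $G\in\{\texttt{MEMBER},\texttt{NON-MEMBER}\}$ be the adversary's output. Put $\mathrm{TPR}=\Pr[G=\texttt{MEMBER}\mid b=1]$ and $\mathrm{FPR}=\Pr[G=\texttt{MEMBER}\mid b=0]$, so $\mathrm{TNR}=1-\mathrm{FPR}$ and, using $\mathrm{BA}=\tfrac12(\mathrm{TPR}+\mathrm{TNR})$,
\[ \mathrm{BA}_{\A}\;=\;\tfrac12+\tfrac12\bigl(\mathrm{TPR}-\mathrm{FPR}\bigr). \]
Hence it suffices to prove $\mathrm{TPR}-\mathrm{FPR}\le 1-e^{-\epsilon}$.

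The crux is the claim that the transcript $s=(x,x')$ delivered to $\A$ (and therefore, after post-processing, the guess $G$) is $(\epsilon,0)$-indistinguishable between the two branches: for every measurable $E$,
\[ \Pr[s\in E\mid b=1]\le e^{\epsilon}\,\Pr[s\in E\mid b=0]\quad\text{and}\quad\Pr[s\in E\mid b=0]\le e^{\epsilon}\,\Pr[s\in E\mid b=1]. \]
To establish this I would couple the two branches so that the executions differ only in the presence of the tested record: condition on the query point $x$, draw the remaining records i.i.d.\ from $\D$, and let $D_t$ contain $x$ in branch $b=1$ and not contain it in branch $b=0$, so the two training sets are neighboring. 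Conditioned on this coupling, $x'$ is produced from $D_t$ by the pipeline $D\mapsto\R(f_{\T(D)},x,D)$, which is $(\epsilon,0)$-DP by hypothesis; since it is fed neighboring inputs, the displayed inequalities hold for the conditional law of $x'$, and averaging over the coupling preserves them. The adversary then applies a randomized map to $s$ (its query access to $\D$ is independent of $D_t$), so post-processing of differential privacy carries the inequalities over to $E=\{G=\texttt{NON-MEMBER}\}$.

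Granting the indistinguishability, the remainder is arithmetic. Taking $E=\{G=\texttt{NON-MEMBER}\}$ in the second inequality gives $1-\mathrm{FPR}\le e^{\epsilon}\,(1-\mathrm{TPR})$, hence $\mathrm{TPR}\le 1-e^{-\epsilon}+e^{-\epsilon}\mathrm{FPR}$, and therefore
\[ \mathrm{TPR}-\mathrm{FPR}\;\le\;\bigl(1-e^{-\epsilon}\bigr)\bigl(1-\mathrm{FPR}\bigr)\;\le\;1-e^{-\epsilon}, \]
using $\mathrm{FPR}\in[0,1]$ and $e^{-\epsilon}\le 1$. Substituting into the identity for $\mathrm{BA}_{\A}$ yields the theorem.

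The main obstacle is making the indistinguishability step airtight, because the two branches of the game do not differ only in whether $x\in D_t$: the \texttt{MEMBER} branch samples $x\sim D_t^{0}$ (which couples $x$ to the rest of the training data through the trained model), whereas the \texttt{NON-MEMBER} branch samples $x\sim\D^{0,\theta}$. One must argue that after conditioning on (or coupling) $x$ the residual randomness genuinely reduces to a neighboring-dataset change to which DP of $\R$ applies; the clean resolutions are to fold the selection of $x$ into the differentially private mechanism (equivalently, to read the hypothesis as ``the map $D_t\mapsto s$ is $(\epsilon,0)$-DP''), or to invoke exchangeability of the i.i.d.\ draws producing $D_t$. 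Everything else — the reduction, post-processing, and the final inequality — is routine.
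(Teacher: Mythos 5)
Your proposal is correct and follows essentially the same route as the paper: the paper's key step is exactly your indistinguishability claim (its Lemma~\ref{lem:sup_priv} states $\Pr[s\in\bo\mid z\in D_t]\le e^{\epsilon}\Pr[s\in\bo\mid z\notin D_t]$, proved by fixing $D_t\setminus\{z\}$, applying DP of the $D_t\mapsto s$ pipeline to swap $z$ for an arbitrary $z'$, and averaging over $z'\sim\D$ --- the same marginalization your coupling performs), followed by post-processing to the adversary's guess and the same short arithmetic, the only cosmetic difference being that you apply the one-sided inequality to the \texttt{NON-MEMBER} event (TNR $\le e^{\epsilon}\,$FNR, via the $\mathrm{TPR}-\mathrm{FPR}$ form of BA) whereas the paper bounds $\mathrm{TPR}\le e^{\epsilon}\,\mathrm{FPR}$ directly, and both yield $\mathrm{BA}\le\tfrac12+\tfrac{1-e^{-\epsilon}}{2}$. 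The subtlety you flag --- that the game conditions on $f_{\btheta}(x)=0$ and draws $x\sim D_t^{0}$ in the member branch, so the two branches do not differ purely in a single neighboring record --- is likewise glossed over in the paper's own proof, so your explicit caveat (reading the hypothesis as DP of the full map $D_t\mapsto s$) is if anything a more careful statement of the same argument.
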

\vspace{-0.1in}
While Theorem~\ref{priv_thm} provides strong privacy guarantees, for several reasons both generic and specific to the recourse setting, \emph{differential privacy is not a silver bullet to defend recourses against membership inference attacks}. It is known that training with DP causes a significant drop in accuracy on even relatively simple benchmarks \cite{temp_sig}, and so when accuracy is a concern this defense may not be feasible. Moreover, model accuracy aside, private training could alter the distance between training points and the model boundary, potentially leading to costlier and less actionable recourses for individuals.

\section{Experimental Evaluation}  \label{sec:experiments}


Here, we discuss the detailed experimental evaluation of our proposed attacks.
Relative to related work \citep{label-only}, we are focusing on (i) a variety of recourse algorithms and (ii) evaluate our suggested LRT based attacks.
We do so by first comparing the proposed recourse-based attacks to each other using log-scaled AUC curves, that emphasize the importance of the low false-positive rate regime.
Recently, the latter metric has been advocated for \cite{enhanced_mi, carlini2021}.
Second, we use average-case metrics such as AUC or balanced accuracy (BA) to understand which recourse algorithms are most vulnerable to membership inference attacks.
We show these results in Section \ref{subsec:eval_attack_efficacy}.
Finally, in Section \ref{subsec:understanding} we leverage synthetic data to analyze the determining factors of attack success. \sn{Rewrite this whole summary with a bit more specifics layout the different sections. We don't need to discuss evaluation metrics here as we cover it below}
Below, we describe our experimental setup in more detail.
\begin{table}[tb][ht]
\centering
\begin{tabular}{cccccccccc}
\toprule
\multirow{2}{*}{Data} & \multirow{2}{*}{Measures} & \multicolumn{3}{c}{\texttt{CFD}} & \multicolumn{3}{c}{\texttt{CFD LRT}} \\
\cmidrule(lr){3-8} 
& {} & \texttt{SCFE} & \texttt{GS} & \texttt{CCHVAE} & \texttt{SCFE} &   \texttt{GS} & \texttt{CCHVAE} \\
\midrule
\multirow{4}{*}{A} & AUC & 0.4971   & 0.5038  & 0.5008   & 0.4988 & \textbf{0.5103 } & 0.5066 \\
 & BA & 0.5115  & 0.5125  & 0.5056  & \textbf{0.5132} & 0.5098  &  0.5176  \\
 & TPR (0.1) & 0.1039  & 0.1020 & 0.1058 & 0.1010 & 0.1043 &  \textbf{0.1298} \\
 & TPR (0.01) & 0.0121 & 0.0097 & 0.0157 & \textbf{0.0158}  & 0.0095 & 0.0134  \\
\midrule
\multirow{4}{*}{H} & AUC & 0.5887  & 0.5410  & 0.4874  & 0.5829  & 0.5027 &\textbf{0.6789}  \\
 & BA & 0.5904 & 0.5404 & 0.5473  & 0.5924 & 0.5326  & \textbf{0.6389}  \\
 & TPR (0.1) & 0.1130 & 0.1223 & 0.0863 & 0.1106 & 0.1142 & \textbf{0.2635}  \\
 & TPR (0.01) & 0.0155 & 0.0176 & 0.0016  & 0.0135  & 0.0372 & \textbf{0.0513}  \\
\midrule
\multirow{4}{*}{D} & AUC & \textbf{0.5051}  & 0.5000  & NA   & 0.5050  & 0.5047  & NA \\
 & BA & 0.5100  & 0.5133  & NA & \textbf{0.5145} & 0.5136 & NA    \\
 & TPR (0.1) & 0.1020  & 0.0950 & NA & 0.0894 & \textbf{0.1181} & NA   \\
 & TPR (0.01) & 0.0093 & 0.0083 & NA & 0.0113  & \textbf{0.0159}  & NA \\
\bottomrule
\end{tabular}
\caption{Comparing the efficacy of the distance-based attacks for various recourse methods. The AUC denotes the area under the receiver operating characteristic curve, BA is the Balanced Accuracy, and $\text{TPR}(x)$ measures the TPR when the $\text{FPR}=x$. For the Diabetes data set \texttt{CCHVAE} could not identify any recourses.}
\vspace{-0.1in}
\label{table:comparing_efficacy}
\vspace{-0.1in}
\end{table}

\begin{figure}[!ht]
\centering
\begin{subfigure}{\columnwidth}
\centering
\scalebox{0.95}{%
\includegraphics[width=\textwidth]{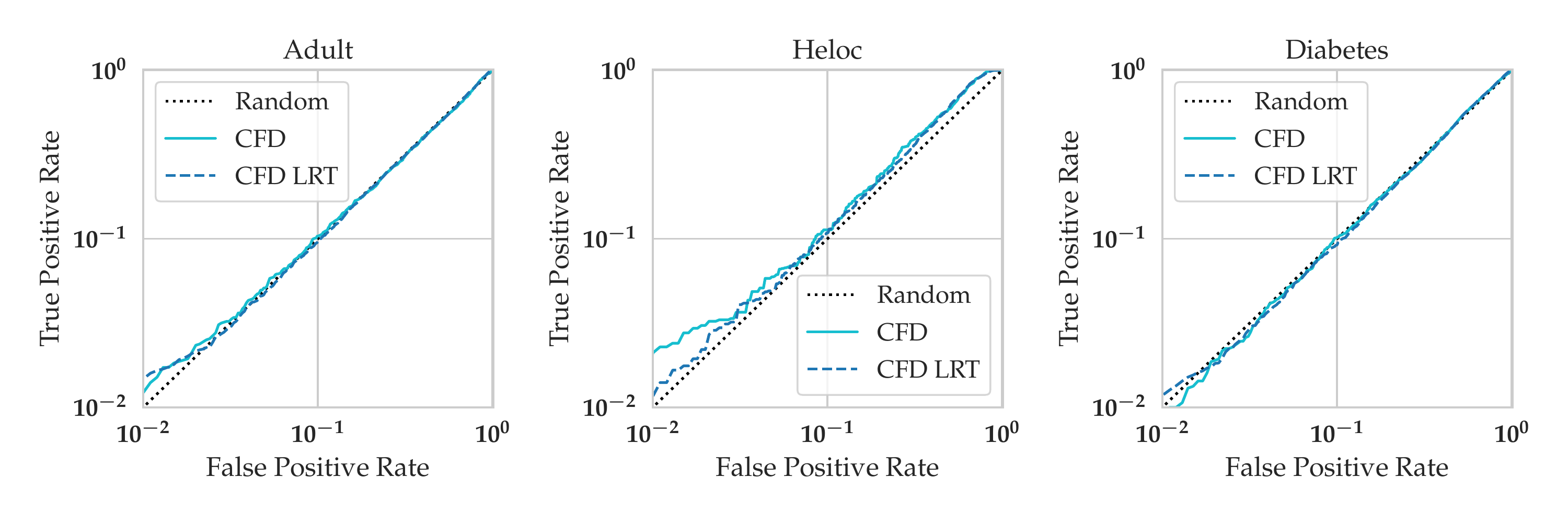}
}
\caption{\texttt{SCFE}}
\end{subfigure}
\vfill 
\begin{subfigure}{\columnwidth}
\centering
\scalebox{0.95}{%
\includegraphics[width=\textwidth]{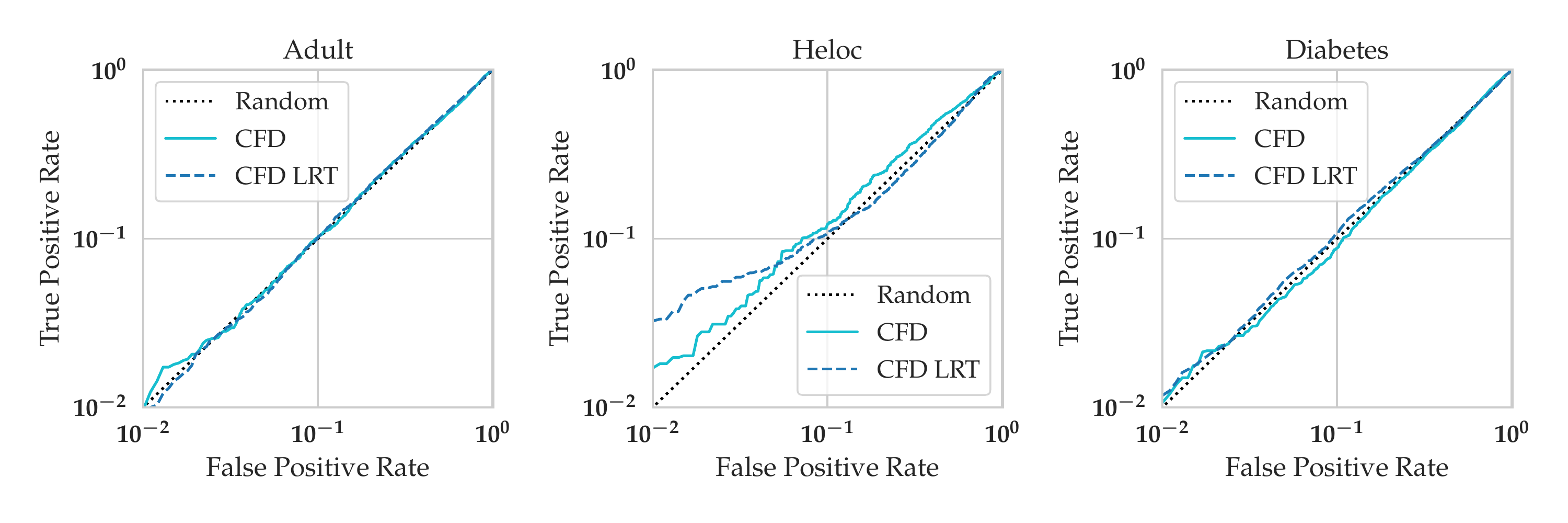}
}
\caption{\texttt{GS}}
\end{subfigure}
\vfill 
\begin{subfigure}{\columnwidth}
\centering
\scalebox{0.65}{%
\includegraphics[width=\textwidth]{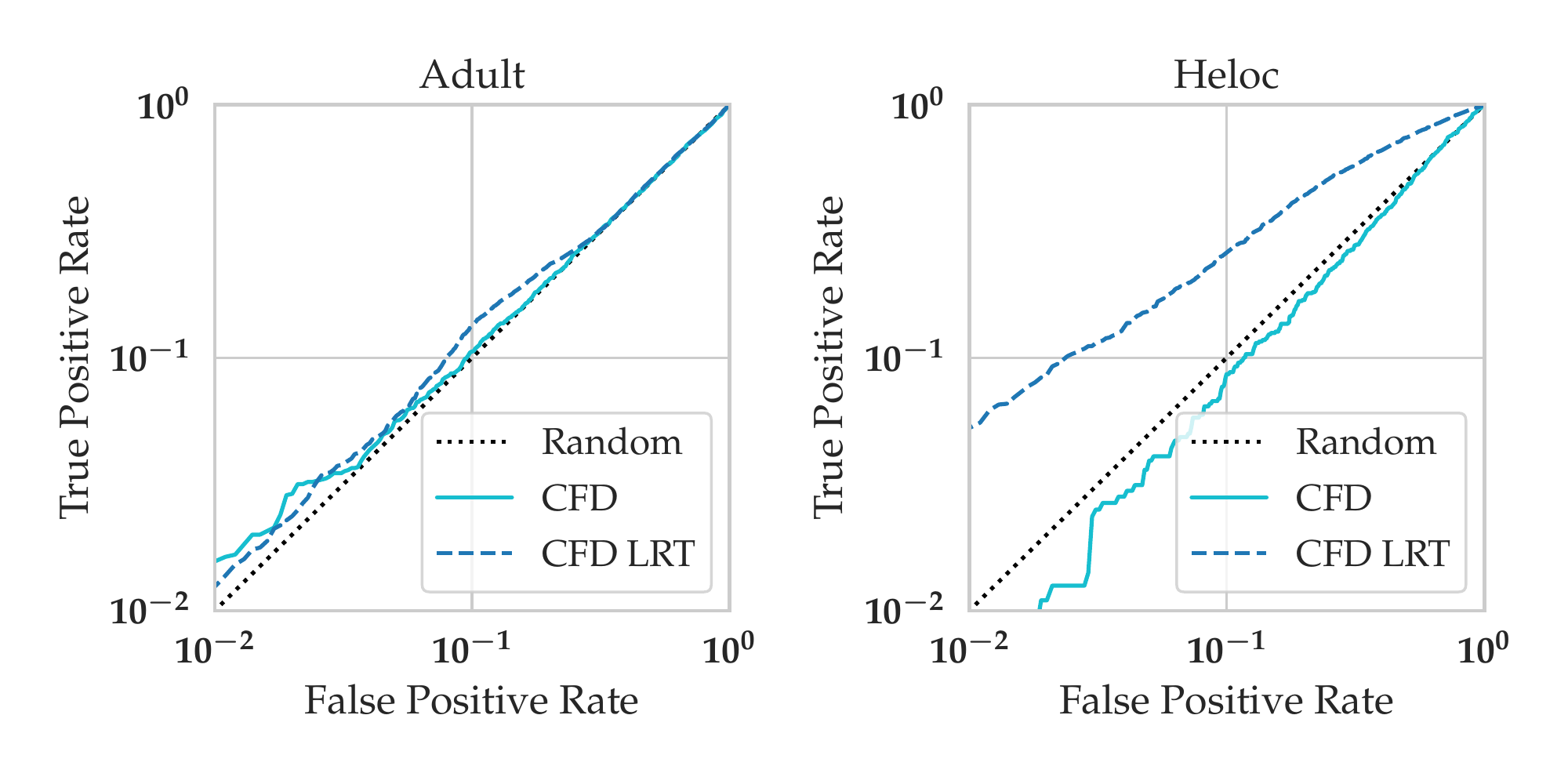}
}
\caption{\texttt{CCHVAE}}
\end{subfigure}
\caption{Comparing the attack efficacy across different MI attacks for fully connected neural network model trained on the real-world data sets when \texttt{SCFE} and \texttt{GS} are used for the attacks.
Small upwards deviations from the diagonal at low false-positive rates (i.e. 0.01) indicate that a small fraction of points can accurately be identified as members of the training data set.
}
\vspace{-0.2in}
\label{fig:real_world_all_methods}
\vspace{-0.1in}
\end{figure} 
\subsection{Setup}
\paragraph{Datasets}
Our first data set is the \emph{Adult} (A) data set \cite{Dua:2019} that originates from the 1994 Census database, consisting of 14 attributes and 48,842 instances. The class label indicates whether an individual has an income greater than 50,000 USD/year. Second, we use the \emph{Home Equity Line of Credit} \emph{(Heloc)} (H) data set ($d=23$). Here, the target variable records a score indicating whether individuals will repay the Heloc account within a fixed time window. Across both tasks we consider individuals in need of recourse if their scores lies below the median score across the entire data set, thresholding the scores based on the median to obtain binary target labels. 
Third, we use the \emph{Diabetes} (D) data set which contains information on diabetic patients from 130 different US hospitals \citep{strack2014impact}. 
The patients are described using administrative (e.g., length of stay) and medical records (e.g., test results) ($d=42$), and the prediction task is concerned with identifying whether a patient will be readmitted within the next 30 days. 
In line with \citep{shorki2021explanation}, we sub sampled smaller data sets of 10000 points from each of these datasets. 5000 points are left to the model owner to train the their private model, while another 5000 data points are used by the adversary to train their shadow models when applicable. 

Additionally, we follow \citet{shorki2021explanation} and generate synthetic data sets from the sklearn library. 
For $d$ features the method randomly chooses a vertex from the $d$-dimensional hypercube as a center for each of the classes, and samples Gaussian distributed random variables centered at the vertex with unit variance. 
For the linear model experiments, we use $d\in \{100, 1000, 5000, 7000\}$ with $n=5000$ training and test samples. 
For the non-linear models we use $d\in \{50, 150\}$.

\textbf{Recourse Algorithms and Predictive Models.}
We apply our techniques to three different methods which aim to generate low-cost recourses using different principles: \texttt{SCFE} is the method suggested by \citet{wachter2017counterfactual} and uses a gradient-based objective to find recourses, \texttt{GS} conducts a random search for recourse in the input space \citep{laugel2017inverse}, and \texttt{CCHVAE} \citep{pawelczyk2019} searches for recourse in a lower dimensional latent space using a generative model to encourage recourses to lie on the data manifold.
All methods use a $\ell_1$-regularizer to encourage sparse recourses.
We use implementations from the \texttt{CARLA} library \citep{pawelczyk2021carla}.
For all data sets, we trained a fully connected neural network with one hidden layer and 1000 nodes. 
The network uses ReLU activation functions and are trained using the ADAM optimizer. 
The network is trained for 250 epochs.
Across all tasks, we use a learning rate of 0.0001.
All recourses were generated with respect to this classifier. 

\paragraph{Baseline Attacks.}
We implement several baseline attacks: 
the baseline of random guessing that for a target FPR $\alpha$ predicts \texttt{MEMBER} with probability $\alpha$, and baselines based on the loss $\ell$.
The loss-based baselines are simple thresholding on the loss \cite{yeom2017privacy} (i.e., \texttt{Loss}), and the offline loss-based LRT \cite{carlini2021} (i.e., \texttt{Loss LRT}).

\paragraph{Evaluation Measures.} We use several well established measures to validate the efficacy of our proposed membership inference attacks. Consistent with with previous works \citep{yeom2017privacy,shokri2017membership,shorki2021explanation} we report balanced accuracy (BA) and receiver operating characteristic (ROC) area under the curve (AUC) scores.  Additionally, we follow \citep{carlini2021,enhanced_mi} and also report log-scale ROC curves, and true positive rates of the attacks at low false positive rates. The authors argue that, for membership inference attacks, average case metrics such as BA and AUC are not well suited. 
The underlying idea is that if a membership inference attack can  identify even a very small subset of the training data with very high confidence, then the attack should be considered successful. 
We follow this intuition and primarily report our findings using this metric.




\subsection{Evaluating the Attack Efficacy} \label{subsec:eval_attack_efficacy}
\vspace{-0.1in}
Inspecting Figure~\ref{fig:real_world_all_methods}, we see that for nearly all datasets across all methods, at sufficiently low FPR the \texttt{CFD LRT} curve lies above the diagonal, outperforming the random baseline; the one exception being the Adult dataset with recourses generated from \texttt{GS}. \texttt{CFD} also often outperforms the random baseline in most cases, but not on the Diabetes dataset with recourse method \texttt{SCFE} or the Heloc dataset with method \texttt{CCHVAE}. These trends are reflected in the metrics captured in Table~\ref{table:comparing_efficacy}, where \texttt{CFD LRT} achieves TPR $> .01$ at FPR $= .01$, and AUC $> .5$ in $7$ of the $8$ dataset-recourse settings. \texttt{CFD} achieves TPR $> .01$ at FPR $= .01$ in $4$ of the $8$ settings. Focusing on the metric of TPR at FPR $= .01$, we see that in the $7$ of $8$ settings where either method outperforms the random baseline, \texttt{CFD LRT} achieves higher TPR than \texttt{CFD} $5$ times. This difference is particularly evident on the Heloc dataset, where \texttt{CFD LRT} obtains TPR $35\%, 270\%$, and $413\%$ above the random baseline for \texttt{SCFE}, \texttt{GS}, \texttt{CCHVAE} respectively. In summary, both methods often outperform the random baseline across all metrics, showing substantial privacy leakage from algorithmic recourses, with \texttt{CFD LRT} generally outperforming \texttt{CFD}. The results also show that the Heloc dataset is particularly vulnerable to distance-based attacks across all recourse algorithms, with the Diabetes dataset being less susceptible. On the Adult and Diabetes datasets the distance-based attacks usually outperform random guessing, however, the improvement over the random baseline is less pronounced. 

Perhaps most interesting, is that only in the case of \texttt{CCHVAE} do the \texttt{CFD LRT} attacks that outperform the random baseline and are plotted in Figure~\ref{fig:real_world_all_methods}(c), actually reverse the direction of the threshold. This means that for the \texttt{CFD LRT} curves in Figure~\ref{fig:real_world_all_methods}(c) and metrics in Table~\ref{table:comparing_efficacy} in column \texttt{CCHVAE}, rather than predicting \texttt{MEMBER} in Line $12$ of Algorithm~\ref{alg:thresh_lrt} if $c(x,x') > z_{1-\alpha}$, we predict \texttt{MEMBER} iff $c(x,x') < z_{\alpha}$. While these results stand in contrast to our findings for \texttt{SCFE} and \texttt{GS}, there is an intuitive explanation.   \texttt{CCHVAE} trains a VAE to model the data generating distribution, and performs sampling \emph{in the latent space} to find a point in latent space $z'$ that is close to the representation of $x$ in the latent space $\text{encode}(x) = z_{x}$. Then it outputs the recourse $x' = \text{decode}(z')$ in the input space. Regardless of the specific $z'$ found by the recourse algorithm, $\text{decode}(z')$ is still in the range of the generative model. It is a known property of generative models like VAEs and GANs that their generated samples tend to be closer to training points than to test points, a fact which has been exploited for MI \cite{gan_survey}. One explanation for the results in Figure~\ref{fig:real_world_all_methods}(c) is that this property of generative models is in some cases outweighing the effect of the optimization during model training.

\subsection{Towards Understanding Attack Success}
\label{subsec:understanding}
To better understand the factors underlying these impressive results, in this section we use experiments on synthetic data with $\mathcal{R} = \texttt{SCFE}$ to examine the role of model type, model size, and number of features in attack success. 

\textbf{Linear Models.}
One potential confounding factor in assessing the results for \texttt{SCFE}, is that when $f_\theta(x)$ is non-convex so is the objective in Equation~\ref{eqn:generalrecourse_un}, and so there is no guarantee that the recourse output is close to globally optimal. In the case when $f_{\theta}$ is linear the objective is convex and has a closed form minimizer \cite{pawelczyk2022algorithmic, label-only}. In Figure~\ref{fig:linear_synth_experiments} we train logistic regression models on the synthetic data, varying the number of features, and plotting the distribution of the counterfactual distances for train and test points, as well as the log-scaled ROC curves. We observe that the training distance distribution starts moving away from the test distance distribution as the number of feature dimensions increases (see Figure \ref{fig:linear_distances}), and as expected the distance-based attack starts performing better as the number of features increases (see Figure \ref{fig:linear_tpr_fpr}). 
Strikingly, the \texttt{CFD LRT} in particular not only outperforms the \texttt{CFD} and random baseline, but also thresholding based on the \texttt{Loss} and the \texttt{Loss LRT} --- attacks which have access to the full loss function and the label $y$. Figures~\ref{fig:linear_distances}, \ref{fig:linear_tpr_fpr} suggest two potential reasons why the results in Figure~\ref{fig:real_world_all_methods} for \texttt{SCFE} do not exhibit the same level of attack performance as on the synthetic data: 
(i) Since the model classes are non-convex it could be due to our optimization failing to find the recourse that minimizes Equation~\ref{eqn:generalrecourse_un}, and (ii) Distance-based membership inference attacks are more effective in high dimensions, and the tabular datasets we experiment on are of relatively small dimension $d < 50$.

\begin{figure*}[!ht]
\centering
\begin{subfigure}{\textwidth}
\centering
\scalebox{0.95}{%
\includegraphics[width=\textwidth]{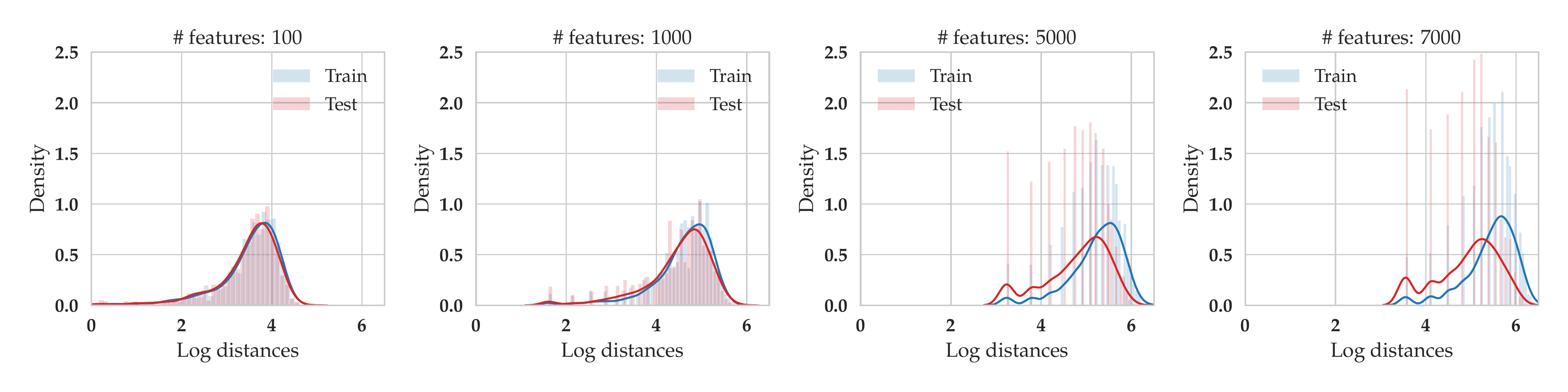}
}
\caption{Comparing log distances ($\ell_1$) to the decision boundary across train and test points.}
\label{fig:linear_distances}
\end{subfigure}
\vfill 
\begin{subfigure}{\textwidth}
\centering
\scalebox{0.95}{%
\includegraphics[width=\textwidth]{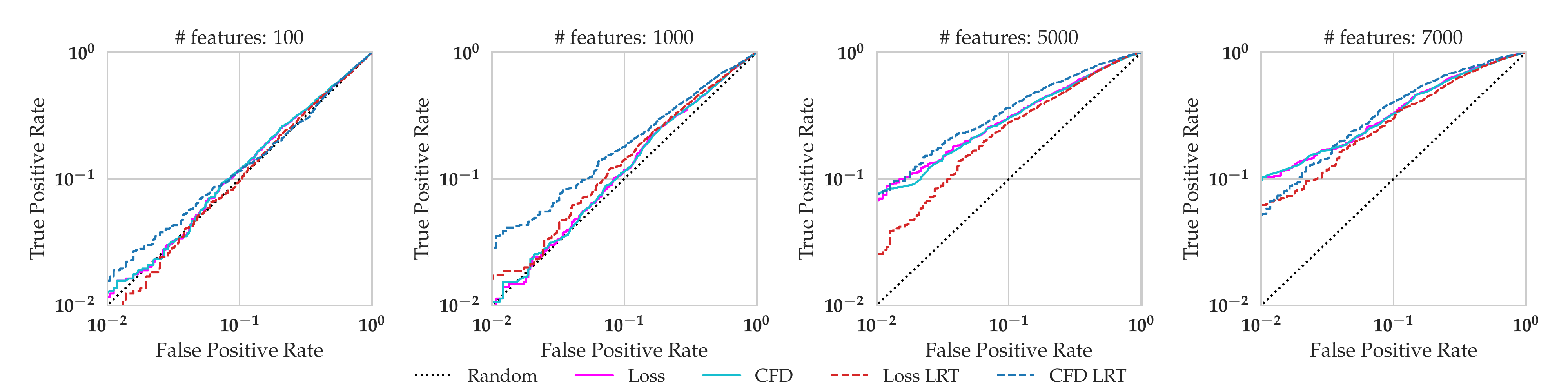}
}
\caption{Comparing the true positive rates against the false positive rates across different MI attacks using log-scaled ROC curves.}
\label{fig:linear_tpr_fpr}
\end{subfigure}
\caption{Demonstrating the efficacy of our proposed distance-based attack for logistic regression models trained on the synthetic data set when \texttt{SCFE} is used for the attack. At the interpolation threshold (i.e., when the number of training points equals the feature dimension: $d=n=5000$) the baseline loss-based and distance-based attacks start outperforming the lrt-based attacks.
}
\label{fig:linear_synth_experiments}
\end{figure*}
\textbf{Nonlinear models.}
\begin{figure*}[!ht]
\centering
\begin{subfigure}{\textwidth}
\centering
\scalebox{0.95}{%
\includegraphics[width=\textwidth]{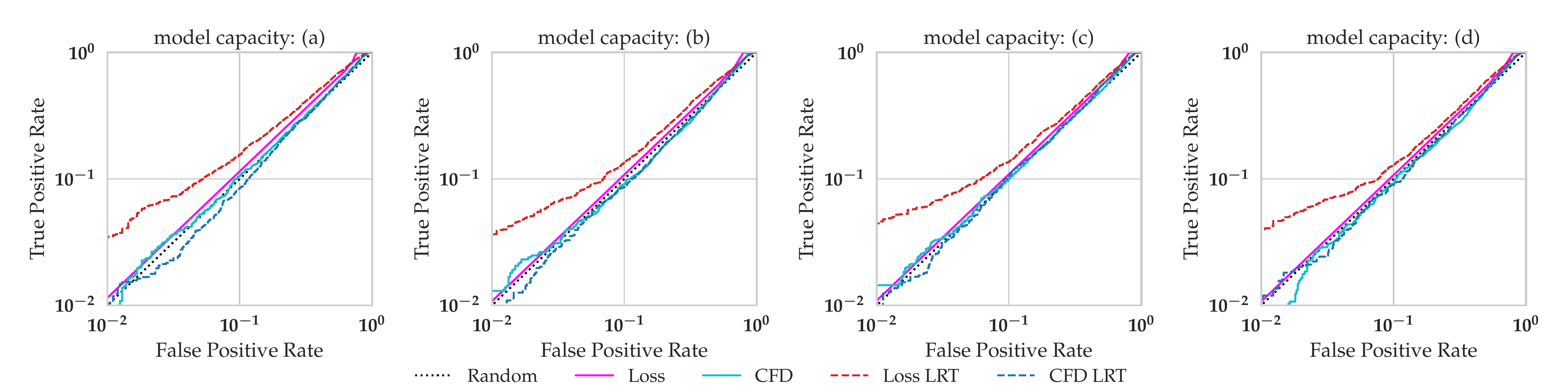}
}
\caption{$\text{\# features} = 50$}
\label{fig:ann_synth_tpr_fpr_n10000_d50}
\end{subfigure}
\vfill 
\begin{subfigure}{\textwidth}
\centering
\scalebox{0.95}{%
\includegraphics[width=\textwidth]{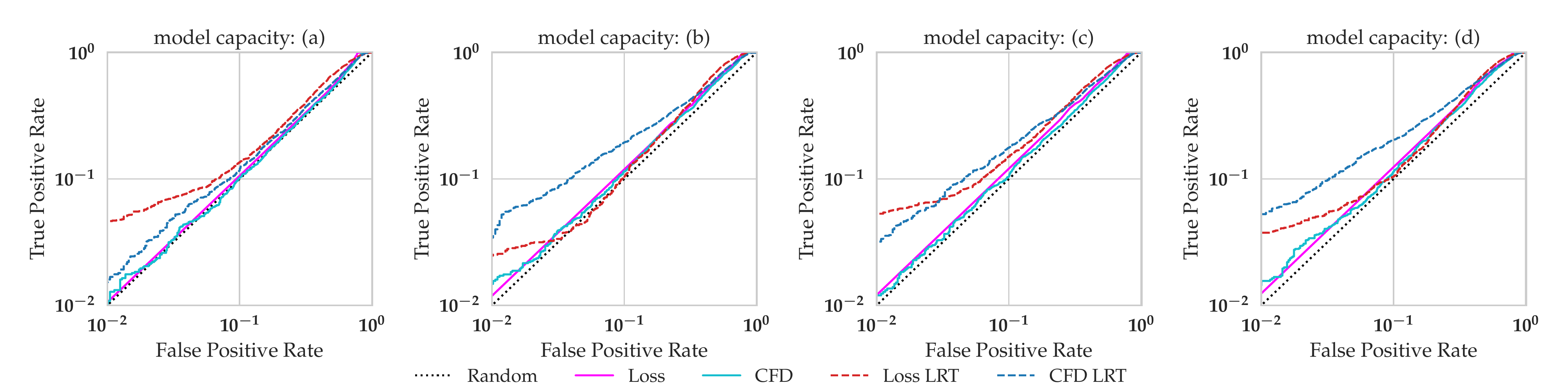}
}
\caption{$\text{\# features} = 150$}
\label{fig:ann_synth_tpr_fpr_n10000_d150}
\end{subfigure}
\caption{Demonstrating that both the network capacity and the number of features $d$ matter for the efficacy of the distance-based attack. We trained neural network models on $10000$ instances from the synthetic data set and used \texttt{SCFE} for the attack. 
From left to right the model capacity increases: (a): two-layer neural network with 1000 hidden nodes. (b): three-layer neural network with 100 hidden nodes in each hidden layer. (c): three-layer neural networks with 333 hidden nodes in each hidden layer. (d): three-layer neural networks with 1000 hidden nodes in each hidden layer.}
\label{fig:ann_synth_experiments_n10000}
\vspace{-0.2in}
\end{figure*}

So far we have studied the effect of the dimension on the attack success, which in the case of linear models is equal to the number of parameters. 
To disentangle the effect that the number of features has from the effect that model capacity has on attack success, we use fully-connected neural networks for which we can control both factors independently.  
We generate two synthetic data sets with $d =  50, 150$ respectively, and study to what extent an increase in model capacity, in this case the number of hidden nodes, impacts the performance of our attacks.
For $d=50$ the results in Figure \ref{fig:ann_synth_tpr_fpr_n10000_d50} suggest that increasing model capacity does not yield performance increases for our distance-based attacks.
On the other hand, when $d=150$ the results in Figure \ref{fig:ann_synth_tpr_fpr_n10000_d150} show that increasing model capacity drastically increases the attack success of the \texttt{CFD LRT}, while the simple \texttt{CFD} still does not work well.
For the model with the largest capacity, three layers of 1000 hidden nodes per layer, \texttt{CFD LRT} performs considerably better than even the \texttt{Loss LRT}.
Taken together these results suggest that a combination of high model capacity and high feature dimension increases the vulnerability of recourses to MI attacks, where feature dimension appears to play a  more important role.





\section{Conclusion}  \label{section:conlusion}
\vspace{-0.1in}
In this work, we investigated the privacy risks associated with algorithmic recourse. More specifically, we introduced a general class of membership inference attacks called \emph{counterfactual distance-based attacks} which leverage algorithmic recourse to determine if an instance belongs to the training data of the underlying model or not. Empirical results on multiple synthetic and real-world datasets clearly demonstrate the efficacy of the proposed attacks, and highlight significant privacy leakage through recourses generated by a wide range of recourse methods. The proposed attacks also outperformed other state-of-the-art loss-based membership inference attacks on data with sufficiently high dimensionality. Overall, our results shed light on the critical risk of unintended privacy leakage through algorithmic recourse. 
Our work paves the way for other important future research directions. For instance, exploring solutions to mitigate the privacy risks highlighted in our work, either through heuristic methods, or by developing novel ways to generate differentially private recourses which are provably robust to such attacks. 
\bibliography{bibliography}

\begin{thebibliography}{48}
\providecommand{\natexlab}[1]{#1}
\providecommand{\url}[1]{\texttt{#1}}
\expandafter\ifx\csname urlstyle\endcsname\relax
  \providecommand{\doi}[1]{doi: #1}\else
  \providecommand{\doi}{doi: \begingroup \urlstyle{rm}\Url}\fi

\bibitem[Abadi et~al.(2016)Abadi, Chu, Goodfellow, McMahan, Mironov, Talwar,
  and Zhang]{abadi2016deep}
Martin Abadi, Andy Chu, Ian Goodfellow, H~Brendan McMahan, Ilya Mironov, Kunal
  Talwar, and Li~Zhang.
\newblock Deep learning with differential privacy.
\newblock In \emph{Proceedings of the 2016 ACM SIGSAC conference on computer
  and communications security}, pages 308--318, 2016.

\bibitem[A{\"\i}vodji et~al.(2020)A{\"\i}vodji, Bolot, and
  Gambs]{aivodji2020model}
Ulrich A{\"\i}vodji, Alexandre Bolot, and S{\'e}bastien Gambs.
\newblock Model extraction from counterfactual explanations.
\newblock \emph{arXiv preprint arXiv:2009.01884}, 2020.

\bibitem[Antorán et~al.(2021)Antorán, Bhatt, Adel, Weller, and
  Hernández-Lobato]{antoran2020getting}
Javier Antorán, Umang Bhatt, Tameem Adel, Adrian Weller, and José~Miguel
  Hernández-Lobato.
\newblock Getting a clue: A method for explaining uncertainty estimates.
\newblock In \emph{International Conference on Learning Representations
  (ICLR)}, 2021.

\bibitem[Barocas et~al.(2020)Barocas, Selbst, and Raghavan]{barocas2020}
Solon Barocas, Andrew~D. Selbst, and Manish Raghavan.
\newblock The hidden assumptions behind counterfactual explanations and
  principal reasons.
\newblock In \emph{Proceedings of the Conference on Fairness, Accountability,
  and Transparency (FAT*)}, New York, NY, USA, 2020. ACM.

\bibitem[Carlini et~al.(2021)Carlini, Chien, Nasr, Song, Terzis, and
  Tram{\`{e}}r]{carlini2021}
Nicholas Carlini, Steve Chien, Milad Nasr, Shuang Song, Andreas Terzis, and
  Florian Tram{\`{e}}r.
\newblock Membership inference attacks from first principles.
\newblock \emph{CoRR}, abs/2112.03570, 2021.

\bibitem[Chen et~al.(2020)Chen, Yu, Zhang, and Fritz]{gan_survey}
Dingfan Chen, Ning Yu, Yang Zhang, and Mario Fritz.
\newblock Gan-leaks: {A} taxonomy of membership inference attacks against
  generative models.
\newblock In Jay Ligatti, Xinming Ou, Jonathan Katz, and Giovanni Vigna,
  editors, \emph{{CCS} '20: 2020 {ACM} {SIGSAC} Conference on Computer and
  Communications Security, Virtual Event, USA, November 9-13, 2020}, pages
  343--362. {ACM}, 2020.
\newblock \doi{10.1145/3372297.3417238}.
\newblock URL \url{https://doi.org/10.1145/3372297.3417238}.

\bibitem[Choquette{-}Choo et~al.(2020)Choquette{-}Choo, Tram{\`{e}}r, Carlini,
  and Papernot]{label-only}
Christopher~A. Choquette{-}Choo, Florian Tram{\`{e}}r, Nicholas Carlini, and
  Nicolas Papernot.
\newblock Label-only membership inference attacks.
\newblock \emph{CoRR}, abs/2007.14321, 2020.

\bibitem[Dandl et~al.(2020)Dandl, Molnar, Binder, and Bischl]{dandl2020multi}
Susanne Dandl, Christoph Molnar, Martin Binder, and Bernd Bischl.
\newblock Multi-objective counterfactual explanations.
\newblock In \emph{International Conference on Parallel Problem Solving from
  Nature}, pages 448--469. Springer, 2020.

\bibitem[Dhurandhar et~al.(2018)Dhurandhar, Chen, Luss, Tu, Ting, Shanmugam,
  and Das]{Dhurandhar2018}
Amit Dhurandhar, Pin-Yu Chen, Ronny Luss, Chun-Chen Tu, Paishun Ting,
  Karthikeyan Shanmugam, and Payel Das.
\newblock Explanations based on the missing: Towards contrastive explanations
  with pertinent negatives.
\newblock In \emph{Advances in Neural Information Processing Systems
  (NeurIPS)}, 2018.

\bibitem[Dominguez-Olmedo et~al.(2021)Dominguez-Olmedo, Karimi, and
  Schölkopf]{dominguezolmedo2021adversarial}
Ricardo Dominguez-Olmedo, Amir-Hossein Karimi, and Bernhard Schölkopf.
\newblock On the adversarial robustness of causal algorithmic recourse.
\newblock \emph{arXiv:2112.11313}, 2021.

\bibitem[Dua and Graff(2017)]{Dua:2019}
Dheeru Dua and Casey Graff.
\newblock {UCI} machine learning repository, 2017.

\bibitem[Dwork and Roth(2014)]{privacybook}
Cynthia Dwork and Aaron Roth.
\newblock The algorithmic foundations of differential privacy.
\newblock \emph{Found. Trends Theor. Comput. Sci.}, 9\penalty0 (3-4):\penalty0
  211--407, 2014.
\newblock \doi{10.1561/0400000042}.

\bibitem[Gupta et~al.(2019)Gupta, Nokhiz, Roy, and
  Venkatasubramanian]{gupta2019equalizing}
Vivek Gupta, Pegah Nokhiz, Chitradeep~Dutta Roy, and Suresh Venkatasubramanian.
\newblock Equalizing recourse across groups.
\newblock \emph{arXiv preprint arXiv:1909.03166}, 2019.

\bibitem[Joshi et~al.(2019)Joshi, Koyejo, Vijitbenjaronk, Kim, and
  Ghosh]{joshi2019towards}
Shalmali Joshi, Oluwasanmi Koyejo, Warut Vijitbenjaronk, Been Kim, and Joydeep
  Ghosh.
\newblock Towards realistic individual recourse and actionable explanations in
  black-box decision making systems.
\newblock \emph{arXiv:1907.09615}, 2019.

\bibitem[Karimi et~al.(2020{\natexlab{a}})Karimi, Barthe, Balle, and
  Valera]{karimi2019model}
Amir-Hossein Karimi, Gilles Barthe, Borja Balle, and Isabel Valera.
\newblock Model-agnostic counterfactual explanations for consequential
  decisions.
\newblock In \emph{International Conference on Artificial Intelligence and
  Statistics (AISTATS)}, 2020{\natexlab{a}}.

\bibitem[Karimi et~al.(2020{\natexlab{b}})Karimi, von K{\"u}gelgen,
  Sch{\"o}lkopf, and Valera]{karimi2020probabilistic}
Amir-Hossein Karimi, Julius von K{\"u}gelgen, Bernhard Sch{\"o}lkopf, and
  Isabel Valera.
\newblock Algorithmic recourse under imperfect causal knowledge: a
  probabilistic approach.
\newblock In \emph{Conference on Neural Information Processing Systems
  (NeurIPS)}, 2020{\natexlab{b}}.

\bibitem[Karimi et~al.(2021)Karimi, Sch{\"o}lkopf, and
  Valera]{karimi2021algorithmic}
Amir-Hossein Karimi, Bernhard Sch{\"o}lkopf, and Isabel Valera.
\newblock Algorithmic recourse: from counterfactual explanations to
  interventions.
\newblock In \emph{Proceedings of the 2021 ACM Conference on Fairness,
  Accountability, and Transparency}, pages 353--362, 2021.

\bibitem[Laugel et~al.(2017)Laugel, Lesot, Marsala, Renard, and
  Detyniecki]{laugel2017inverse}
Thibault Laugel, Marie-Jeanne Lesot, Christophe Marsala, Xavier Renard, and
  Marcin Detyniecki.
\newblock Inverse classification for comparison-based interpretability in
  machine learning.
\newblock \emph{arXiv preprint arXiv:1712.08443}, 2017.

\bibitem[Lundberg and Lee(2017)]{lundberg2017unified}
Scott~M Lundberg and Su-In Lee.
\newblock A unified approach to interpreting model predictions.
\newblock In \emph{Advances in Neural Information Processing Systems}, pages
  4765--4774, 2017.

\bibitem[Mahajan et~al.(2019)Mahajan, Tan, and Sharma]{mahajan2019preserving}
Divyat Mahajan, Chenhao Tan, and Amit Sharma.
\newblock Preserving causal constraints in counterfactual explanations for
  machine learning classifiers.
\newblock \emph{arXiv preprint arXiv:1912.03277}, 2019.

\bibitem[Mothilal et~al.(2020)Mothilal, Sharma, and Tan]{mothilal2020fat}
Ramaravind~K. Mothilal, Amit Sharma, and Chenhao Tan.
\newblock Explaining machine learning classifiers through diverse
  counterfactual explanations.
\newblock In \emph{Proceedings of the Conference on Fairness, Accountability,
  and Transparency (FAT*)}, 2020.

\bibitem[Papernot et~al.(2018)Papernot, Song, Mironov, Raghunathan, Talwar, and
  Erlingsson]{papernot2018scalable}
Nicolas Papernot, Shuang Song, Ilya Mironov, Ananth Raghunathan, Kunal Talwar,
  and {\'U}lfar Erlingsson.
\newblock Scalable private learning with pate.
\newblock \emph{arXiv preprint arXiv:1802.08908}, 2018.

\bibitem[Papernot et~al.(2020)Papernot, Thakurta, Song, Chien, and
  Erlingsson]{temp_sig}
Nicolas Papernot, Abhradeep Thakurta, Shuang Song, Steve Chien, and Úlfar
  Erlingsson.
\newblock Tempered sigmoid activations for deep learning with differential
  privacy, 2020.

\bibitem[Pawelczyk et~al.(2020)Pawelczyk, Broelemann, and
  Kasneci]{pawelczyk2019}
Martin Pawelczyk, Klaus Broelemann, and Gjergji Kasneci.
\newblock Learning model-agnostic counterfactual explanations for tabular data.
\newblock In \emph{Proceedings of The Web Conference 2020 (WWW)}. ACM, 2020.

\bibitem[Pawelczyk et~al.(2021)Pawelczyk, Bielawski, Van~den Heuvel, Richter,
  and Kasneci]{pawelczyk2021carla}
Martin Pawelczyk, Sascha Bielawski, Johan Van~den Heuvel, Tobias Richter, and
  Gjergji Kasneci.
\newblock Carla: A python library to benchmark algorithmic recourse and
  counterfactual explanation algorithms.
\newblock In \emph{Advances in Neural Information Processing Systems (NeurIPS)
  (Benchmark and Datasets Track)}, volume~34, 2021.

\bibitem[Pawelczyk et~al.(2022)Pawelczyk, Datta, van-den Heuvel, Kasneci, and
  Lakkaraju]{pawelczyk2022algorithmic}
Martin Pawelczyk, Teresa Datta, Johannes van-den Heuvel, Gjergji Kasneci, and
  Himabindu Lakkaraju.
\newblock Algorithmic recourse in the face of noisy human responses.
\newblock \emph{arXiv preprint arXiv:2203.06768}, 2022.

\bibitem[Poyiadzi et~al.(2020)Poyiadzi, Sokol, Santos-Rodriguez, De~Bie, and
  Flach]{FACE}
Rafael Poyiadzi, Kacper Sokol, Raul Santos-Rodriguez, Tijl De~Bie, and Peter
  Flach.
\newblock Face: Feasible and actionable counterfactual explanations.
\newblock In \emph{Proceedings of the AAAI/ACM Conference on AI, Ethics, and
  Society}, AIES ’20, page 344–350, 2020.

\bibitem[Rawal and Lakkaraju(2020)]{rawal2020interpretable}
Kaivalya Rawal and Himabindu Lakkaraju.
\newblock Interpretable and interactive summaries ofactionable recourses.
\newblock In \emph{Advances in Neural Information Processing Systems
  (NeurIPS)}, volume~33, 2020.

\bibitem[Rawal et~al.(2021)Rawal, Kamar, and Lakkaraju]{rawal2021modelshifts}
Kaivalya Rawal, Ece Kamar, and Himabindu Lakkaraju.
\newblock Algorithmic recourse in the wild: Understanding the impact of data
  and model shifts.
\newblock \emph{arXiv:2012.11788}, 2021.

\bibitem[Ribeiro et~al.(2016)Ribeiro, Singh, and Guestrin]{ribeiro2016should}
Marco~Tulio Ribeiro, Sameer Singh, and Carlos Guestrin.
\newblock " why should i trust you?" explaining the predictions of any
  classifier.
\newblock In \emph{Proceedings of the 22nd ACM SIGKDD international conference
  on knowledge discovery and data mining (KDD)}, pages 1135--1144, 2016.

\bibitem[Rigaki and Garcia(2020)]{rigaki2020survey}
Maria Rigaki and Sebastian Garcia.
\newblock A survey of privacy attacks in machine learning.
\newblock \emph{arXiv preprint arXiv:2007.07646}, 2020.

\bibitem[Sablayrolles et~al.(2019)Sablayrolles, Douze, Schmid, Ollivier, and
  Jegou]{sablayrolles2019whitebox}
Alexandre Sablayrolles, Matthijs Douze, Cordelia Schmid, Yann Ollivier, and
  Herve Jegou.
\newblock White-box vs black-box: {B}ayes optimal strategies for membership
  inference.
\newblock In \emph{Proceedings of the 36th International Conference on Machine
  Learning (ICML)}, 2019.

\bibitem[Shokri et~al.(2017)Shokri, Stronati, Song, and
  Shmatikov]{shokri2017membership}
Reza Shokri, Marco Stronati, Congzheng Song, and Vitaly Shmatikov.
\newblock Membership inference attacks against machine learning models.
\newblock In \emph{2017 IEEE symposium on security and privacy (SP)}, pages
  3--18. IEEE, 2017.

\bibitem[Shokri et~al.(2021)Shokri, Strobel, and Zick]{shorki2021explanation}
Reza Shokri, Martin Strobel, and Yair Zick.
\newblock On the privacy risks of model explanations.
\newblock In \emph{Proceedings of the 2021 AAAI/ACM Conference on AI, Ethics,
  and Society (AIES)}, page 231–241, 2021.

\bibitem[Slack et~al.(2021)Slack, Hilgard, Lakkaraju, and
  Singh]{slack2021counterfactual}
Dylan Slack, Sophie Hilgard, Himabindu Lakkaraju, and Sameer Singh.
\newblock Counterfactual explanations can be manipulated.
\newblock In \emph{Advances in Neural Information Processing Systems
  (NeurIPS)}, volume~34, 2021.

\bibitem[Smilkov et~al.(2017)Smilkov, Thorat, Kim, Vi{\'{e}}gas, and
  Wattenberg]{smilkov2017smoothgrad}
Daniel Smilkov, Nikhil Thorat, Been Kim, Fernanda Vi{\'{e}}gas, and Martin
  Wattenberg.
\newblock Smoothgrad: {R}emoving noise by adding noise.
\newblock \emph{CoRR}, abs/1706.03825, 2017.

\bibitem[Spooner et~al.(2021)Spooner, Dervovic, Long, Shepard, Chen, and
  Magazzeni]{spooner2021counterfactual}
Thomas Spooner, Danial Dervovic, Jason Long, Jon Shepard, Jiahao Chen, and
  Daniele Magazzeni.
\newblock Counterfactual explanations for arbitrary regression models.
\newblock \emph{arXiv:2106.15212}, 2021.

\bibitem[Strack et~al.(2014)Strack, DeShazo, Gennings, Olmo, Ventura, Cios, and
  Clore]{strack2014impact}
Beata Strack, Jonathan~P DeShazo, Chris Gennings, Juan~L Olmo, Sebastian
  Ventura, Krzysztof~J Cios, and John~N Clore.
\newblock Impact of hba1c measurement on hospital readmission rates: analysis
  of 70,000 clinical database patient records.
\newblock \emph{BioMed research international}, 2014, 2014.

\bibitem[Sundararajan et~al.(2017)Sundararajan, Taly, and
  Yan]{sundararajan2017axiomatic}
Mukund Sundararajan, Ankur Taly, and Qiqi Yan.
\newblock Axiomatic attribution for deep networks.
\newblock In \emph{International Conference on Machine Learning}, pages
  3319--3328, 2017.

\bibitem[Tolomei et~al.(2017)Tolomei, Silvestri, Haines, and
  Lalmas]{tolomei2017interpretable}
Gabriele Tolomei, Fabrizio Silvestri, Andrew Haines, and Mounia Lalmas.
\newblock Interpretable predictions of tree-based ensembles via actionable
  feature tweaking.
\newblock In \emph{Proceedings of the ACM SIGKDD International Conference on
  Knowledge Discovery \& Data Mining (KDD)}. ACM, 2017.

\bibitem[Ustun et~al.(2019)Ustun, Spangher, and Liu]{Ustun2019ActionableRI}
Berk Ustun, Alexander Spangher, and Y.~Liu.
\newblock Actionable recourse in linear classification.
\newblock In \emph{Proceedings of the Conference on Fairness, Accountability,
  and Transparency (FAT*)}, 2019.

\bibitem[Van~Looveren and Klaise(2019)]{van2019interpretable}
Arnaud Van~Looveren and Janis Klaise.
\newblock Interpretable counterfactual explanations guided by prototypes.
\newblock \emph{arXiv preprint arXiv:1907.02584}, 2019.

\bibitem[Verma et~al.(2020)Verma, Dickerson, and
  Hines]{verma2020counterfactual}
Sahil Verma, John Dickerson, and Keegan Hines.
\newblock Counterfactual explanations for machine learning: A review.
\newblock \emph{arXiv:2010.10596}, 2020.

\bibitem[Voigt and Von~dem Bussche(2017)]{voigt2017eu}
Paul Voigt and Axel Von~dem Bussche.
\newblock The eu general data protection regulation (gdpr).
\newblock \emph{A Practical Guide, 1st Ed., Cham: Springer International
  Publishing}, 10:\penalty0 3152676, 2017.

\bibitem[von K{\"u}gelgen et~al.(2020)von K{\"u}gelgen, Karimi, Bhatt, Valera,
  Weller, and Sch{\"o}lkopf]{von2020fairness}
Julius von K{\"u}gelgen, Amir-Hossein Karimi, Umang Bhatt, Isabel Valera,
  Adrian Weller, and Bernhard Sch{\"o}lkopf.
\newblock On the fairness of causal algorithmic recourse.
\newblock \emph{arXiv preprint arXiv:2010.06529}, 2020.

\bibitem[Wachter et~al.(2018)Wachter, Mittelstadt, and
  Russell]{wachter2017counterfactual}
Sandra Wachter, Brent Mittelstadt, and Chris Russell.
\newblock Counterfactual explanations without opening the black box: automated
  decisions and the gdpr.
\newblock \emph{Harvard Journal of Law \& Technology}, 31\penalty0 (2), 2018.

\bibitem[Ye et~al.(2021)Ye, Maddi, Murakonda, and Shokri]{enhanced_mi}
Jiayuan Ye, Aadyaa Maddi, Sasi~Kumar Murakonda, and Reza Shokri.
\newblock Enhanced membership inference attacks against machine learning
  models.
\newblock \emph{CoRR}, abs/2111.09679, 2021.

\bibitem[Yeom et~al.(2017)Yeom, Giacomelli, Fredrikson, and
  Jha]{yeom2017privacy}
Samuel Yeom, Irene Giacomelli, Matt Fredrikson, and Somesh Jha.
\newblock Privacy risk in machine learning: Analyzing the connection to
  overfitting.
\newblock \emph{arXiv preprint arXiv:1709.01604}, 2017.

\end{thebibliography}
\bibliographystyle{plainnat}

\newpage
\appendix

\section{Proof of Theorem~\ref{priv_thm}}
\begin{proof}
Throughout the proof let the event that $\A$ recieves $(z, s)$ and outputs $G = \text{MEMBER}$ be denoted by $\mathcal{A}(z, s) =1$. First we prove the following simple lemma in the Appendix, which says that since the recourse is generated privately, the probability it takes on any value can't be changed by more than $e^{\epsilon}$ depending on whether a given point $z$ is in the training set. 
\begin{lemma}\label{lem:sup_priv}
If $\bo$ is any event, for any $z$: 
$$ \Pr_{s \sim \mathcal{R}, D_t \sim \mathcal{D}^n}[s \in \bo | z \in D_t] \leq e^{\epsilon}\Pr_{s \sim \mathcal{R}, D_t \sim \mathcal{D}^n}[s \in \bo|z \not \in D_t]
$$
\end{lemma}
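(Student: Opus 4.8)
The plan is to derive the inequality directly from the $(\epsilon,0)$-differential privacy of the recourse mechanism via a one-step ``replace-one'' coupling. First I would fix the query point $x$ and view the randomized map from the training set to the released recourse, $\mathcal{M}(D_t) := \mathcal{R}(\T(D_t), x, D_t)$, as the single object to which the hypothesis applies: the assumption that $\mathcal{R}$ is $(\epsilon,0)$-differentially private in the training data is precisely the statement that $\mathcal{M}$ satisfies, for any two datasets $D, D'$ differing in exactly one element and any output event $\bo$,
\begin{equation}
\Pr_{s \sim \mathcal{M}(D)}[s \in \bo] \leq e^{\epsilon}\,\Pr_{s \sim \mathcal{M}(D')}[s \in \bo]. \label{eq:dp-neighbor-plan}
\end{equation}

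Next I would make the two conditionals in the statement concrete, which is where the care is needed. Conditioning on $z \in D_t$ I interpret as placing $z$ in the training set and drawing the remaining $n-1$ points i.i.d.\ from $\D$; conditioning on $z \notin D_t$ I interpret as drawing all $n$ points i.i.d.\ from $\D$ (for a fixed $z$ and continuous $\D$ the event $z \in D_t$ has probability zero, so this unconditioned draw is the honest meaning of the second conditional). The coupling is then to draw $D' \sim \D^{n-1}$ together with an independent fresh point $z'' \sim \D$, and to compare the neighboring datasets $\{z\}\cup D'$ and $\{z''\}\cup D'$. With this coupling I can write
\begin{align}
\Pr[s \in \bo \mid z \in D_t] &= \mathbb{E}_{D' \sim \D^{n-1}}\big[\Pr_{s \sim \mathcal{M}(\{z\}\cup D')}[s \in \bo]\big], \label{eq:in-plan} \\
\Pr[s \in \bo \mid z \notin D_t] &= \mathbb{E}_{D' \sim \D^{n-1},\, z'' \sim \D}\big[\Pr_{s \sim \mathcal{M}(\{z''\}\cup D')}[s \in \bo]\big], \label{eq:out-plan}
\end{align}
where the second line uses that $\{z''\}\cup D'$ with $z'' \sim \D,\, D' \sim \D^{n-1}$ is exactly a draw $D_t \sim \D^n$.

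Finally I would apply \eqref{eq:dp-neighbor-plan} pointwise: for every fixed $D'$ and $z''$ the datasets $\{z\}\cup D'$ and $\{z''\}\cup D'$ are neighbors, so $\Pr_{s \sim \mathcal{M}(\{z\}\cup D')}[s \in \bo] \leq e^{\epsilon}\,\Pr_{s \sim \mathcal{M}(\{z''\}\cup D')}[s \in \bo]$. Since the left-hand side does not depend on $z''$, taking the expectation over $D' \sim \D^{n-1}$ and $z'' \sim \D$ of both sides and using monotonicity of expectation turns \eqref{eq:in-plan} into $e^{\epsilon}$ times \eqref{eq:out-plan}, which is exactly the claim. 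I expect the main obstacle to be the bookkeeping of the conditioning rather than any hard inequality: I must ensure the ``in'' and ``out'' datasets have the same cardinality $n$ (forcing the replace-one coupling instead of an add/remove one, so that \eqref{eq:dp-neighbor-plan} applies with a genuine single-element change), and I must handle the measure-zero event $\{z \in D_t\}$ under a continuous $\D$ by \emph{defining} the conditionals as the leave-one-out insertion distributions above rather than via naive Bayes conditioning.
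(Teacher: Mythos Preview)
Your proposal is correct and follows essentially the same replace-one coupling argument as the paper: both fix the remaining $n-1$ training points, apply the $(\epsilon,0)$-DP bound to swap $z$ for a fresh point, and then integrate over $\D^{n-1}$ and the fresh point to recover the two conditionals. The only cosmetic difference is that the paper routes through an intermediate $\inf_{z'}$ before averaging over $z'\sim\D$, whereas you average over $z''$ directly; your version is slightly cleaner but not a genuinely different route.
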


\begin{proof}
Fix arbitrary $D_t\backslash\{z\} = (z_2, \ldots z_n)$. Expanding
\begin{equation}
    \Pr_{s \sim \mathcal{R}, D_t \sim \mathcal{D}^n}[s \in \bo | z \in D_t] = \int_{(z_2, \ldots z_n) \sim \mathcal{D}^{n-1}}\Pr_{s \sim \mathcal{R}}[s \in \bo | z, D_t\backslash\{z\} = (z_2, \ldots z_n)]\Pr_{\mathcal{D}}[(z_2, \ldots z_n)]
\end{equation}

By the definition of differential privacy, for arbitrary $z'$:
\begin{multline}
    \int_{(z_2, \ldots z_n) \sim \mathcal{D}^{n-1}}\Pr_{s \sim \mathcal{R}}[s \in \bo | z, D_t\backslash\{z\} = (z_2, \ldots z_n)]\Pr_{\mathcal{D}}[(z_2, \ldots z_n)] \leq \\
    \int_{(z_2, \ldots z_n) \sim \mathcal{D}^{n-1}}e^{\epsilon}\Pr_{s \sim \mathcal{R}}[s \in \bo | z', D_t\backslash\{z\} =  (z_2, \ldots z_n)]\Pr_{\mathcal{D}}[(z_2, \ldots z_n)]
\end{multline}

Since this holds for arbitrary $z'$, we have: 
\begin{multline}
\Pr[s \in \bo | z \in D_t] \leq \inf_{z'}\left(\int_{(z_2, \ldots z_n) \sim \mathcal{D}^{n-1}}e^{\epsilon}\Pr_{s \sim \mathcal{R}}[s \in \bo | z', D_t\backslash\{z\} = (z_2, \ldots z_n)]\Pr_{\mathcal{D}}[(z_2, \ldots z_n)]\right) \leq \\
\mathbb{E}_{z' \sim \mathcal{D}}\left[\int_{(z_2, \ldots z_n) \sim \mathcal{D}^{n-1}}e^{\epsilon}\Pr_{s \sim \mathcal{R}}[s \in \bo | z', D_t\backslash\{z\} = (z_2, \ldots z_n)]\Pr_{\mathcal{D}}[(z_2, \ldots z_n)]\right] = e^{\epsilon}Pr[s \in \bo|z \not \in D_t],
\end{multline}

as desired.

\end{proof}

Recall that $\text{BA} = \frac{\text{FPR} + \text{TNR}}{2} = \frac{\Pr[\mathcal{A}(z, s) = 1 | z \in D_t] + \Pr[\mathcal{A}(z, s) = 0 | z \not \in D_t]}{2}$, and let $a^{1} = \Pr[\mathcal{A}(z, s) = 1 | z \in D_t]$, and $a^{0} = \Pr[\mathcal{A}(z, s) = 0 | z \not \in D_t]$. Then:
\begin{multline}
    a^1 = \Pr[\mathcal{A}(z, s) = 1 | z \in D_t] = \int_{s \in \mathcal{S}}\int_{z \in \mathcal{X} \times \mathcal{Y}}\Pr[\mathcal{A}(z, s) = 1|z,s]\Pr[s|z \in D_{t}]\Pr_{\D}[z] \leq \\ 
\int_{s \in \mathcal{S}}\int_{z \in \mathcal{X} \times \mathcal{Y}}\Pr[\mathcal{A}(z, s) = 1|x,s]\Pr_{\D}[x](e^{\epsilon}\Pr[s|z \not \in D_t]) = \\
\int_{s \in \mathcal{S}}\int_{z \in \mathcal{X} \times \mathcal{Y}}\Pr[\mathcal{A}(z, s) = 1|x,s]\Pr_{\D}[x](e^{\epsilon}\Pr[s]) = \\
e^{\epsilon}\Pr[A(x,s) = 1 | x \not \in D_t] = e^{\epsilon}(1-a_0),
\end{multline}
where the inequality follows from Lemma~\ref{lem:sup_priv}. Rearranging $a^1 \leq e^{\epsilon}(1-a_0)$  we get that $a_0 \leq 1-e^{-\epsilon}a_1$. Hence: 
$$ 
\text{BA} \leq \frac{a^1 + (1-e^{-\epsilon}a^1)}{2} = \frac{1}{2} + \frac{a_1 ( 1-e^{-\epsilon})}{2}
$$
Since $a_1 \leq 1$ this gives the result, and we note that for small $\epsilon$ this can be improved to $\text{BA} \leq \frac{1}{2} + \frac{(2-e^{-\epsilon})(1-e^{-\epsilon})}{4}$.
\end{proof}

\section{TRAINING DETAILS}
\subsection{Classification Models}
\begin{table}[htb]
\centering
\begin{tabular}{ccccc}
\toprule
& \multicolumn{4}{c}{Model} \\
\cmidrule(lr){2-5}
& $d=100$ & $d=1000$ & $d=5000$ & $d=7000$  \\
\cmidrule(lr){1-5}
Train & 0.957 & 0.958 & 0.973 & 0.976 \\
Test & 0.951 & 0.936 & 0.899 & 0.872 \\
\bottomrule
\end{tabular}
\caption{Model performances for the logistic regression classifiers trained on the synthetic data sets.
We measured performance in terms of classification accuracy.}
\label{tab:classification_accuracy_synth_gauss_linear}
\end{table}

\begin{table}[htb]
\centering
\begin{tabular}{ccccccccccc}
\toprule
Data set & \multicolumn{4}{c}{$d=50$} & \multicolumn{4}{c}{$d=150$}  \\
 \cmidrule(lr){1-1} \cmidrule(lr){2-5} \cmidrule(lr){6-9}
Model size & (a) & (b) & (c) & (d) & (a) & (b) & (c) & (d)  \\
\cmidrule(lr){1-1} \cmidrule(lr){2-5} \cmidrule(lr){6-9}
Train & 1.00 & 1.00 & 1.00 & 1.00 & 1.00 & 1.00 & 1.00 & 1.00  \\
Test & 0.8632 & 0.8748 & 0.8820 & 0.8806 & 0.9060 & 0.9196 & 0.9130 & 0.9208 \\
\bottomrule
\end{tabular}
\caption{Model performances for the neural network classifiers trained on the synthetic data sets for different numbers of features.
We measured performance in terms of classification accuracy. (a): two-layer network with 1000 hidden nodes. (b): three-layer neural network with 100 hidden nodes in each hidden layer. (c): three-layer neural networks with 333 hidden nodes in each hidden layer. (d): three-layer neural networks with 1000 hidden nodes in each hidden layer.
}
\label{tab:classification_accuracy_synth_gauss_ann}
\end{table}
\begin{table}[!ht]
\centering
\begin{tabular}{cccc}
\toprule
Data set & \multicolumn{1}{c}{Adult ($d=13$)} & \multicolumn{1}{c}{Heloc ($d=23$)} & \multicolumn{1}{c}{Diabetes ($d=42$)}  \\
 \cmidrule(lr){1-1} \cmidrule(lr){2-2} \cmidrule(lr){3-3} \cmidrule(lr){4-4}
Train & 0.9755 & 1.00 & 0.9096 \\
Test & 0.8109 & 0.6701 & 0.5334 \\
\bottomrule
\end{tabular}
\caption{Model performances for the neural network classifiers trained on the real-world data sets.
We measured performance in terms of classification accuracy for two-layer network with 1000 hidden nodes.}
\label{tab:classification_accuracy_synth_gauss_ann}
\end{table}

\subsection{Recourse methods}

\begin{itemize}
\item {\texttt{SCFE}}: As suggested in~\citet{wachter2017counterfactual}, an Adam optimizer is used to optimize the recourse objective. 
We obtain recourses using an $\ell_1$ distance function, and the binary cross entropy loss between the counterfactual label and the target. 

\item {\texttt{GS}}:
The explanation model uses a counterfactual search algorithm in the input space. 
Particularly, instances are sampled within an $\ell_1$-norm ball with search radius search radius $r_i$ until recourse is successfully obtained.
The search radius of the norm ball is increased until recourse is found. 

\item {\texttt{C-CHVAE}}: An autoencoder is additionally trained to model the data-manifold. 
The explanation model uses a counterfactual search algorithm in the latent space of the AE. 
Particularly, a latent sample within an $\ell_1$-norm ball with search radius $r_l$ is used until recourse is successfully obtained. 
The search radius of the norm ball is increased until recourse is found. 
All generative models use $8$ latent dimensions, and 20 nodes in the first and third hidden layer.
\end{itemize}

\section{ADDITIONAL EXPERIMENTAL RESULTS}
\begin{figure*}[h!]
\centering
\begin{subfigure}{\textwidth}
\centering
\scalebox{0.90}{%
\includegraphics[width=\textwidth]{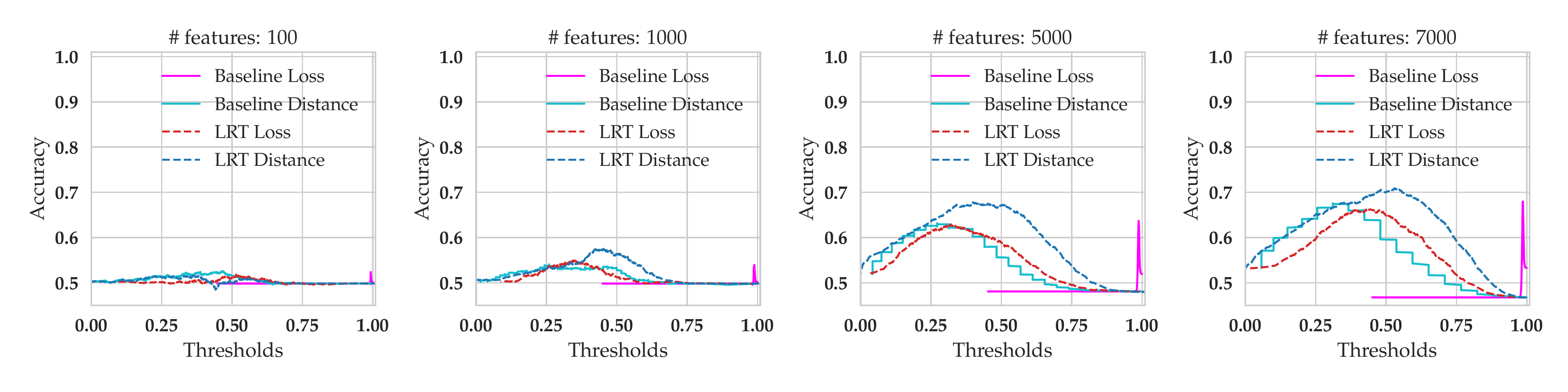}
}
\label{fig:linear_thresholds}
\end{subfigure}
\caption{Demonstrating the efficacy of our proposed distance-based attack for logistic regression models trained on the synthetic data set by showing the attack accuracy as the threshold varies. 
Both the LRT-Distance as well as the baseline distance attacks are the most competetive attacks.
The thresholds have been normalized to the range $[0,1]$.
}
\label{fig:linear_experiments}
\end{figure*}

\begin{figure*}[tb]
\centering
\begin{subfigure}{\textwidth}
\centering
\includegraphics[width=\textwidth]{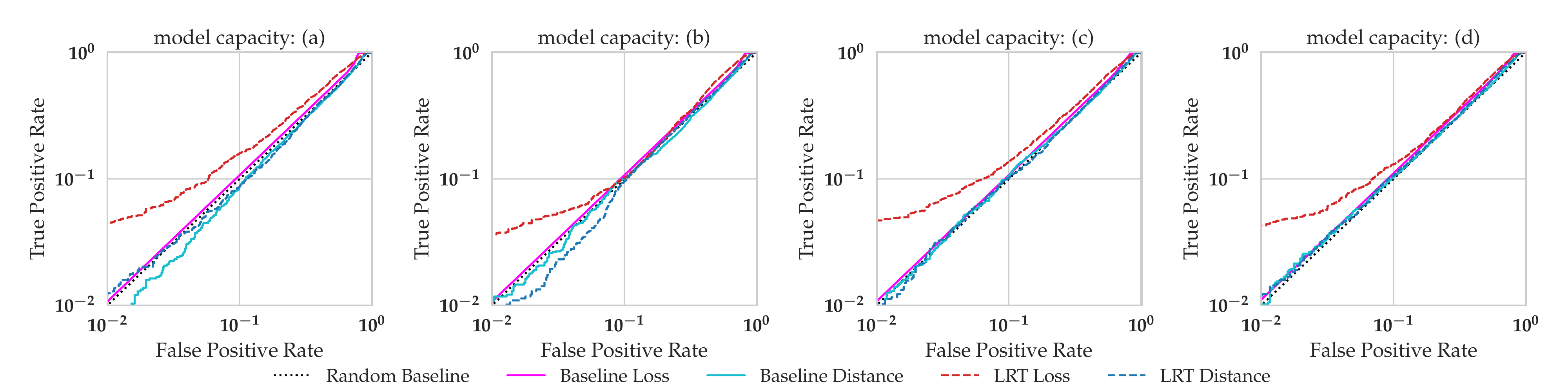}
\caption{$d=50, n=20000$}
\label{fig:ann_synth_tpr_fpr_n20000_d50}
\end{subfigure}
\vfill 
\begin{subfigure}{\textwidth}
\centering
\includegraphics[width=\textwidth]{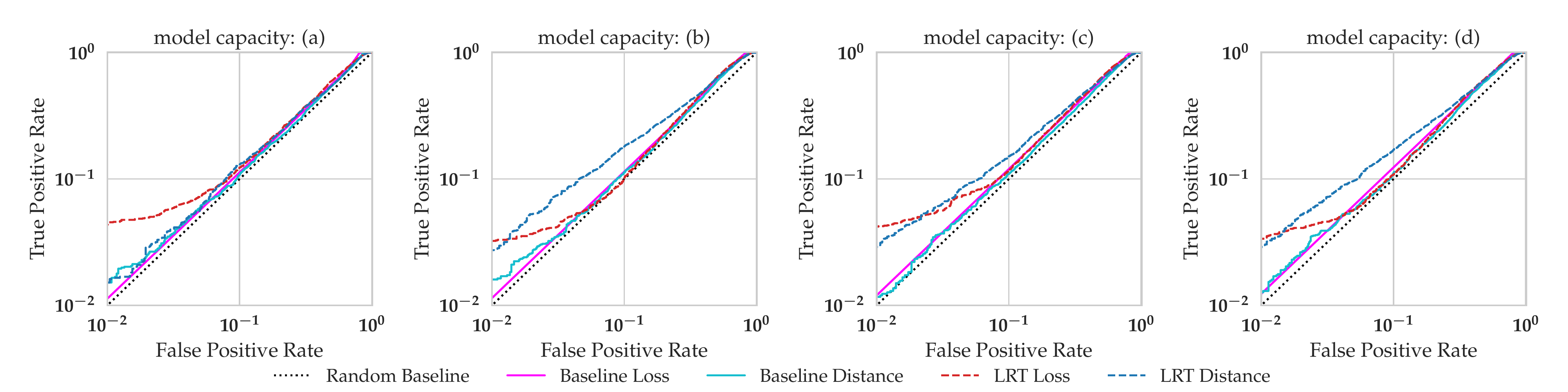}
\caption{$d=150, n=20000$}
\label{fig:fig:ann_synth_tpr_fpr_n20000_d150}
\end{subfigure}
\vfill
\begin{subfigure}{\textwidth}
\centering
\includegraphics[width=\textwidth]{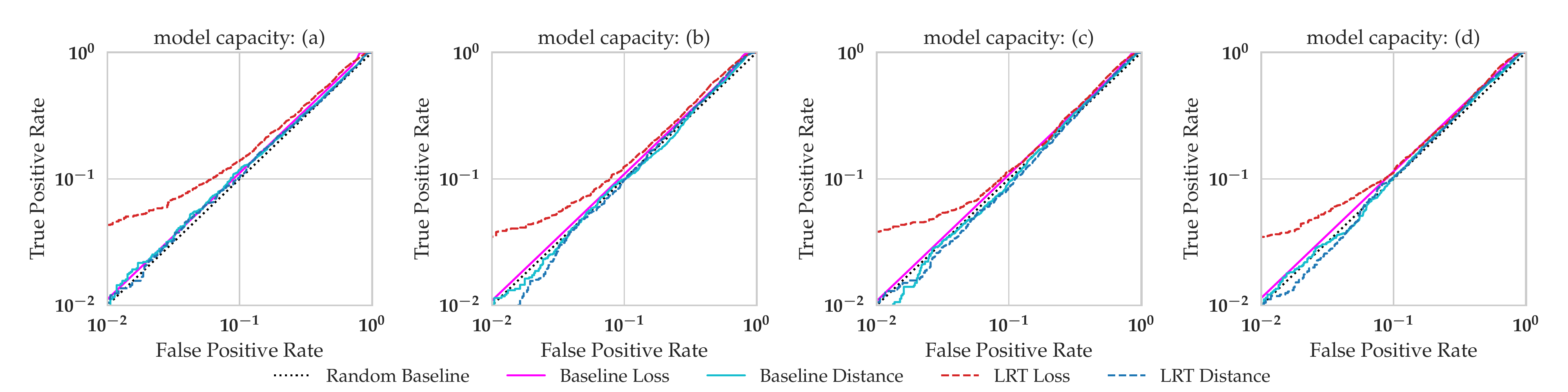}
\caption{$d=50, n=30000$}
\label{fig:fig:ann_synth_tpr_fpr_n20000_d50}
\end{subfigure}
\vfill 
\begin{subfigure}{\textwidth}
\centering
\includegraphics[width=\textwidth]{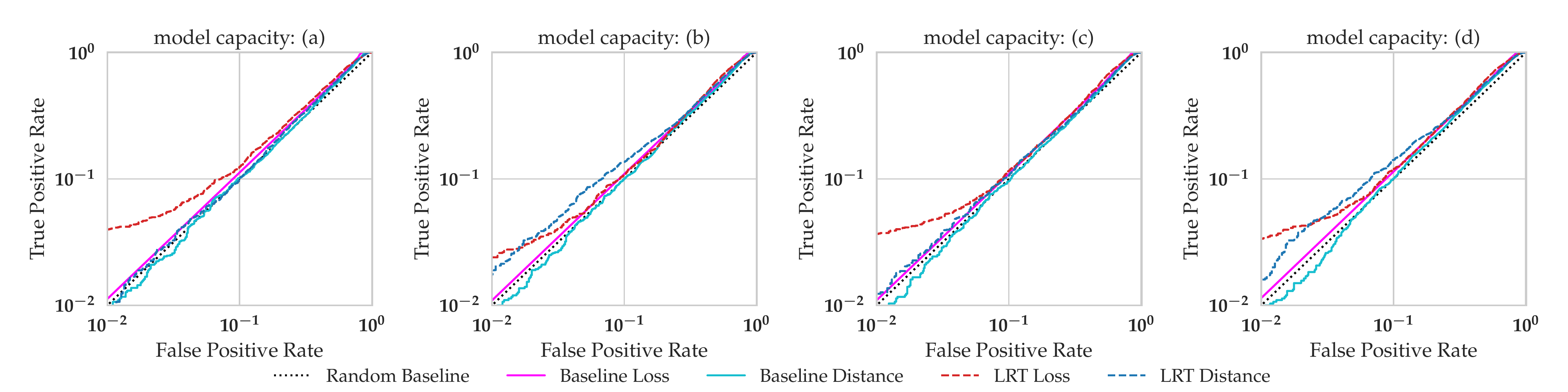}
\caption{$d=150, n=30000$}
\label{fig:fig:ann_synth_tpr_fpr_n30000_d150}
\end{subfigure}
\caption{Demonstrating the efficacy of our proposed distance-based attack for neural network models trained on the synthetic data set when \texttt{SCFE} is used for the attack. (a): two-layer network with 1000 hidden nodes. (b): three-layer neural network with 100 hidden nodes in each hidden layer. (c): three-layer neural networks with 333 hidden nodes in each hidden layer. (d): three-layer neural networks with 1000 hidden nodes in each hidden layer.
}
\label{fig:ann_synth_experiments_fpr_tpr_appendix}
\end{figure*}

\begin{figure*}[tb]
\centering
\begin{subfigure}{\textwidth}
\centering
\includegraphics[width=\textwidth]{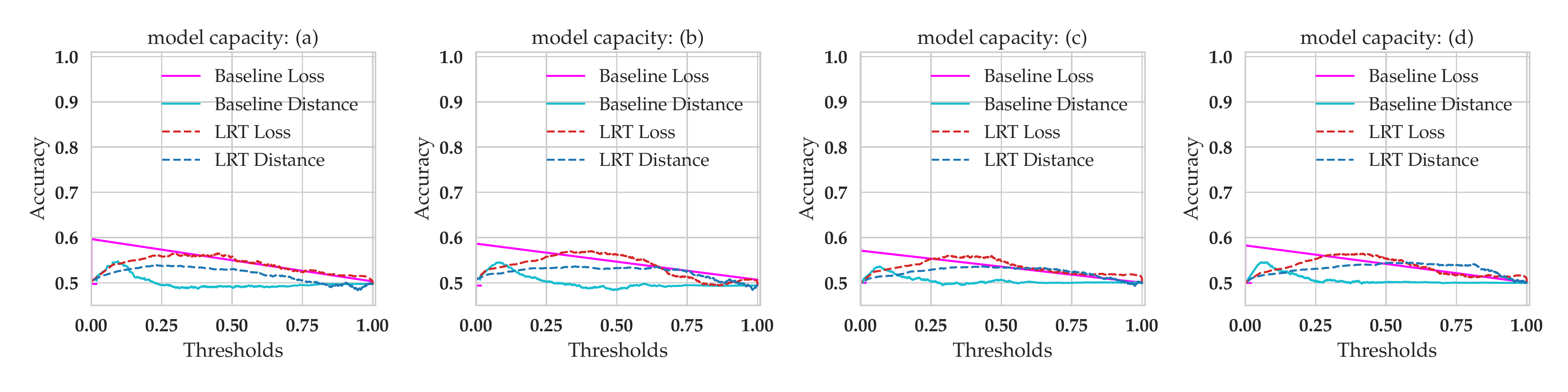}
\caption{$d=50, n=20000$}
\label{fig:ann_synth_thresholds_n20000_d50}
\end{subfigure}
\vfill 
\begin{subfigure}{\textwidth}
\centering
\includegraphics[width=\textwidth]{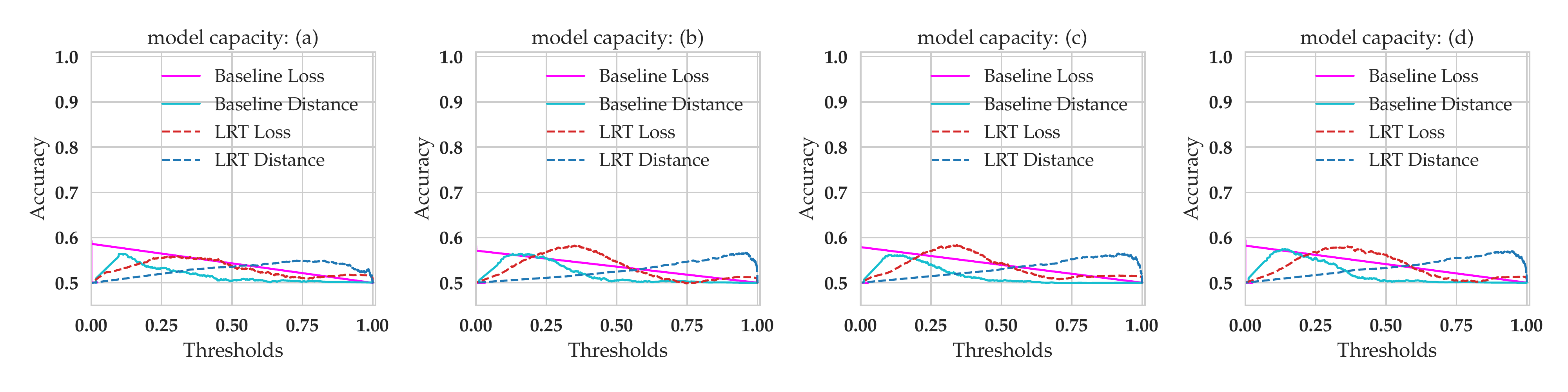}
\caption{$d=150, n=20000$}
\label{fig:fig:ann_synth_thresholds_n20000_d150}
\end{subfigure}
\vfill
\begin{subfigure}{\textwidth}
\centering
\includegraphics[width=\textwidth]{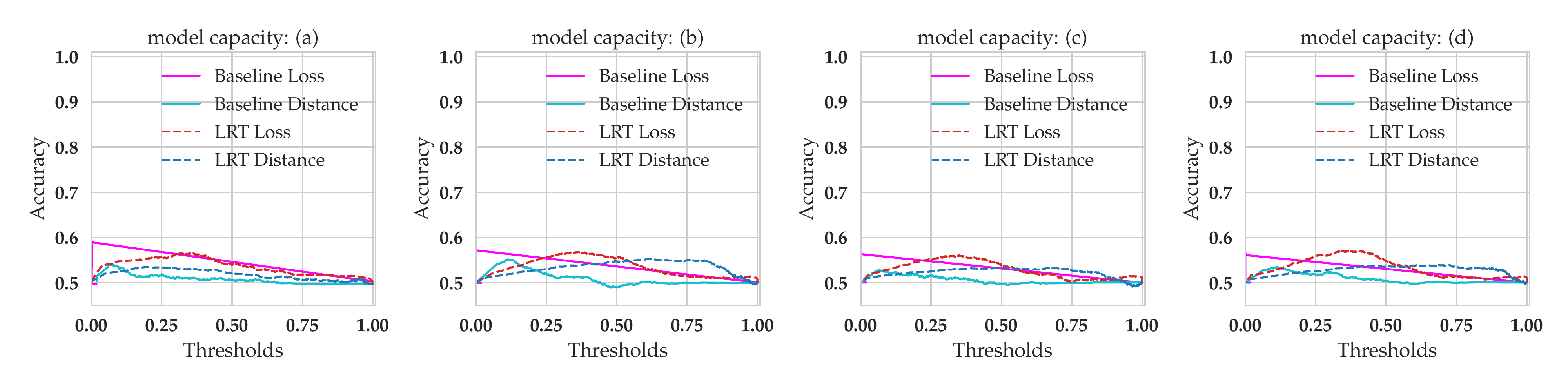}
\caption{$d=50, n=30000$}
\label{fig:fig:ann_synth_thresholds_n20000_d50}
\end{subfigure}
\vfill 
\begin{subfigure}{\textwidth}
\centering
\includegraphics[width=\textwidth]{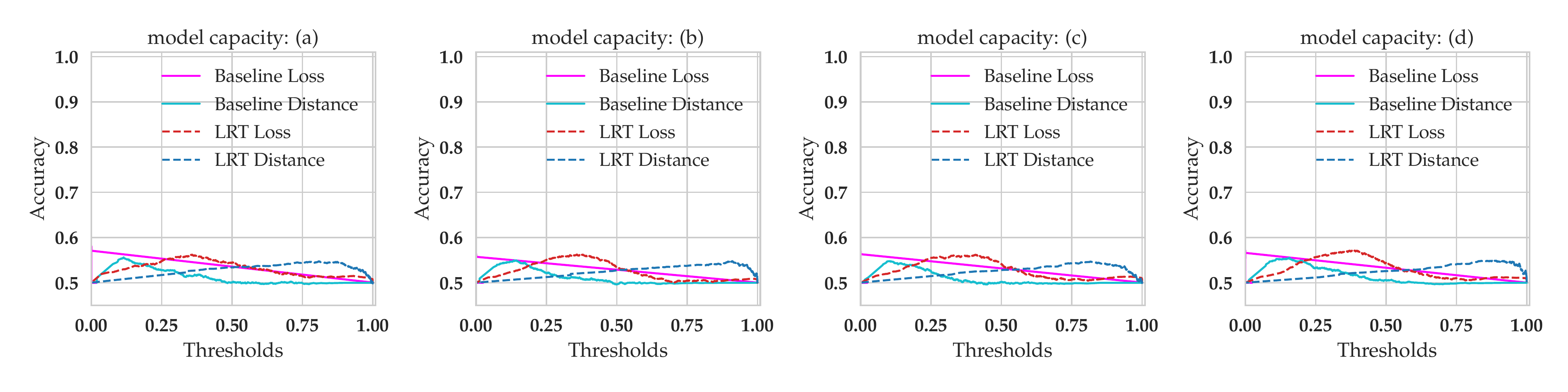}
\caption{$d=150, n=30000$}
\label{fig:fig:ann_synth_thresholds_n30000_d150}
\end{subfigure}
\caption{Demonstrating the efficacy of our proposed distance-based attack for neural network models trained on the synthetic data set when \texttt{SCFE} is used for the attack. (a): two-layer network with 1000 hidden nodes. (b): three-layer neural network with 100 hidden nodes in each hidden layer. (c): three-layer neural networks with 333 hidden nodes in each hidden layer. (d): three-layer neural networks with 1000 hidden nodes in each hidden layer. 
The thresholds have been normalized to the range $[0,1]$ when necessary.
}
\label{fig:ann_synth_experiments_fpr_tpr_appendix}
\end{figure*}

\end{document}